\documentclass{article}

\usepackage{microtype}
\usepackage{graphicx}
\usepackage{subcaption}
\usepackage{booktabs} 

\usepackage{hyperref}

\usepackage[accepted]{icml2026}

\usepackage{amsmath}
\usepackage{amssymb}
\usepackage{mathtools}
\usepackage{amsthm}

\usepackage{pifont} 
\usepackage{multirow}
\usepackage{booktabs,bm}
\usepackage{colortbl}
\usepackage{multirow}
\usepackage{booktabs}
\usepackage{adjustbox}
\definecolor{grey}{rgb}{0.89,0.71,0.57}
\definecolor{pink}{rgb}{1,0.94,1}
\definecolor{purple}{rgb}{0.84,0.78,1}
\definecolor{white}{rgb}{1,1,1}
\usepackage{amsmath,amssymb}
\usepackage{mathtools}
\usepackage[utf8]{inputenc}
\usepackage{comment}
\usepackage{fontawesome5}
\usepackage{wrapfig}
\usepackage{graphicx}
\usepackage{subcaption}
\usepackage{multirow}
\usepackage{booktabs}

\usepackage[capitalize,noabbrev]{cleveref}

\theoremstyle{plain}
\newtheorem{theorem}{Theorem}[section]

\newtheorem{lemma}[theorem]{Lemma}

\theoremstyle{definition}

\newtheorem{assumption}[theorem]{Assumption}
\theoremstyle{remark}

\usepackage[textsize=tiny]{todonotes}

\newcommand{\method}{{\fontfamily{lmtt}\selectfont \textbf{PnP-Corrector}}}

\newcommand{\model}{{\fontfamily{lmtt}\selectfont \textbf{DSLCast}}}

\icmltitlerunning{\method{}: A Universal Correction Framework for Coupled Spatiotemporal Forecasting}
\begin{document}

\twocolumn[

\icmltitle{\method{}: A Universal Correction Framework for Coupled Spatiotemporal Forecasting}



  \icmlsetsymbol{equal}{*}

  \begin{icmlauthorlist}
    \icmlauthor{Hao Wu}{equal,thu,tecent}
    \icmlauthor{Fan Xu}{equal,slai}
    \icmlauthor{Yuxu Lu}{polyu}
    \icmlauthor{Penghao Zhao}{pku}
    \icmlauthor{Fan Zhang}{cuhk}
    \icmlauthor{Hao Jia}{tecent,pku}
    \icmlauthor{Yuxuan Liang}{hkust}
    \icmlauthor{Ruijian Gou}{ouc}
    \icmlauthor{Qingsong Wen}{squirrel}
    \icmlauthor{Xian Wu}{tecent}
    \icmlauthor{Xiaomeng Huang}{thu}
    \icmlauthor{Yuan Gao}{thu}
  \end{icmlauthorlist}

  \icmlaffiliation{thu}{Tsinghua University}
  \icmlaffiliation{tecent}{Tencent}
  \icmlaffiliation{slai}{Shenzhen Loop Area Institute}
  \icmlaffiliation{polyu}{The Hong Kong Polytechnic University}
  \icmlaffiliation{pku}{Peking University}
  \icmlaffiliation{cuhk}{The Chinese University of Hong Kong}
  \icmlaffiliation{hkust}{The Hong Kong University of Science and Technology (Guangzhou)}
  \icmlaffiliation{ouc}{Ocean University of China}
  \icmlaffiliation{squirrel}{Squirrel Ai Learning}

  \icmlcorrespondingauthor{Xian Wu}{\mbox{kevinxwu@tencent.com}}
  \icmlcorrespondingauthor{Xiaomeng Huang}{\mbox{hxm@tsinghua.edu.cn}}
  \icmlcorrespondingauthor{Yuan Gao}{\mbox{yuangao24@mails.tsinghua.edu.cn}}

  \icmlkeywords{Machine Learning, ICML}

  \vskip 0.2in
]

\printAffiliationsAndNotice{\icmlEqualContribution}

\begin{abstract}
Coupled spatiotemporal forecasting is important for predicting the future evolution of multiple interacting dynamical systems, such as in climate models.
However, existing methods are severely constrained by the persistent bottleneck of compounding errors.
In coupled systems, errors from each subsystem simulator propagate and amplify one another, a phenomenon we term \textit{Reciprocal Error Amplification} leading to a rapid collapse of long-range predictions.
To address this challenge, we propose a universal framework called \method{} (\textbf{P}lug-a\textbf{n}d-\textbf{P}lay \textbf{Corrector}).
The core idea of our framework is to decouple the physical simulation from the error correction process: it freezes pre-trained physics simulation engines and exclusively trains a correction agent to proactively counteract the systematic biases emerging from the coupled system.
Furthermore, we design an efficient predictive model architecture, \model{}, to serve as the backbone of this framework.
Extensive experiments demonstrate that our method significantly enhances the long-term stability and accuracy of coupled forecasting systems.
For instance, in the challenging task of a 300-day global ocean-atmosphere coupled forecast, our \method{} framework reduces the prediction error of the baseline model by $28\%$ and surpasses state-of-the-art models on several key metrics.
Codes link: \url{https://github.com/Alexander-wu/PnP-Corrector}.
\end{abstract}    
\section{Introduction}
\label{sec:intro}
Spatiotemporal forecasting is a fundamental challenge in computer vision~\cite{gao2022simvp, wu2024pastnet, shi2015convolutional, han2022predicting, mohajerin2019multistep} and scientific computing~\cite{zhang2023skilful, xiong2023ai, gao2025oneforecast, wu2025triton}. Its goal is to predict the future evolution of dynamical systems, such as video sequences or physical phenomena. Autoregressive models are a dominant paradigm for this task. However, their performance is limited by a persistent bottleneck: \textbf{\textit{compounding errors}}. Small single-step errors in the prediction process are fed back as input for the next step~\cite{kirkpatrick2014interactions, wenninghoff2025interpretable}, causing errors to accumulate and leading to significant divergence from the ground-truth trajectory over long horizons.

\begin{figure}[t]
  \centering
  \includegraphics[width=0.5\textwidth]{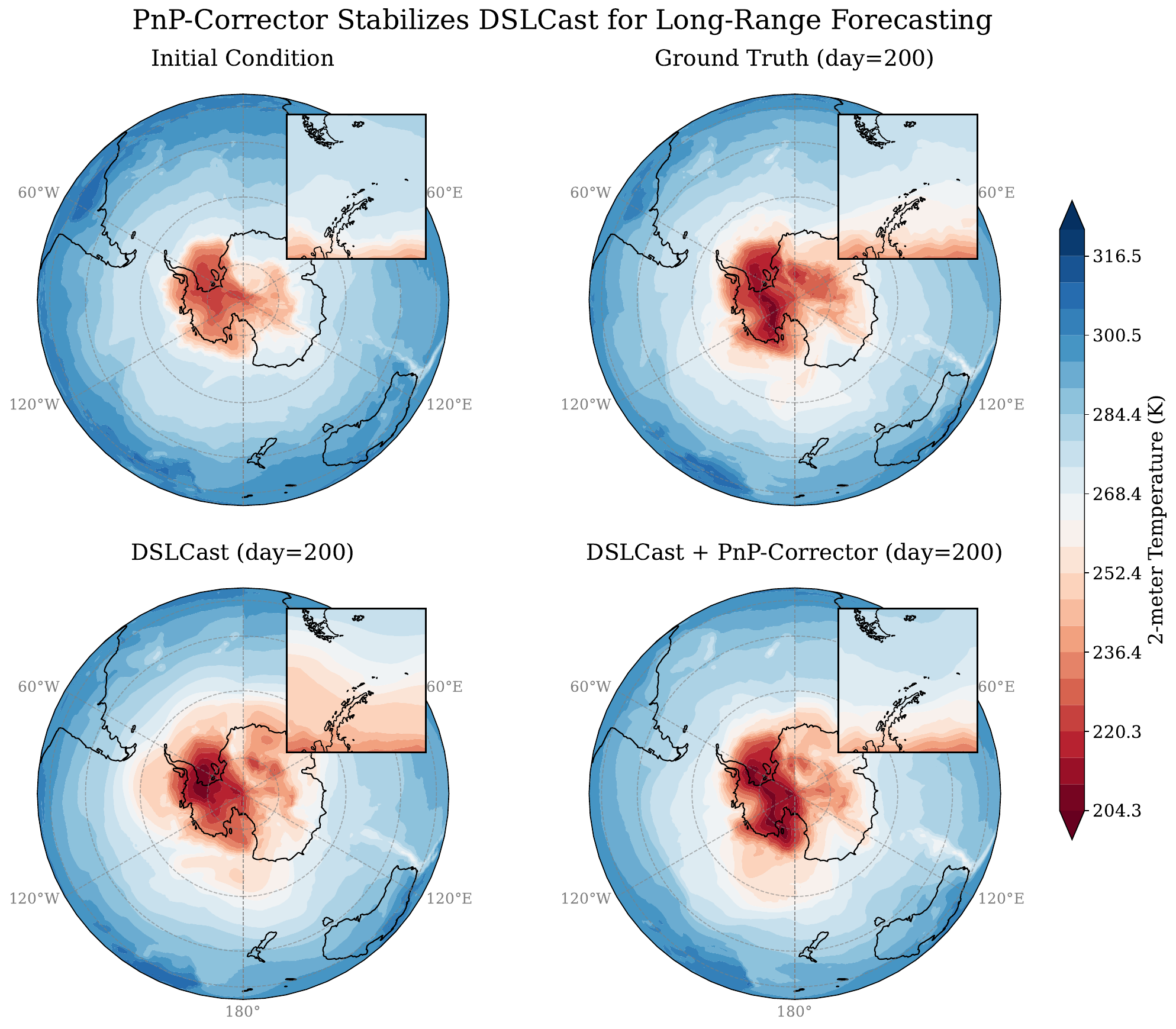}
  \vspace{-10pt}
\caption{
    \textbf{The \method{} framework enables long-term stability in coupled forecasting.}
    This figure compares a 200-day 2-meter temperature (T2M) forecast initialized from the state shown top-left.
    While the standard \model{} baseline (bottom-left) accumulates significant errors and drifts from the true state, our \method{} framework (bottom-right) effectively corrects these systematic biases, producing a forecast that remains in close agreement with the ground truth (top-right).
}
\label{fig:2x2_comparison}
\vspace{-10pt}
\end{figure}

This challenge is dramatically amplified in Coupled Dynamical Systems. In such systems, two or more independent autoregressive simulators, for example, an atmosphere model~\cite{bi2023accurate, gao2025oneforecast} and an ocean model~\cite{xiong2023ai, cui2025forecasting} must drive each other in a closed loop. This bidirectional interaction creates a vicious cycle, we term \textit{\textbf{Reciprocal Error Amplification}: the output error from one simulator becomes input noise for another, which in turn magnifies the error in the first simulator }(see Figure~\ref{fig:intro_figure}). This exponential crosstalk of errors causes existing coupled systems to collapse after a few iterations, making stable and reliable long-term forecasting an unsolved problem~\cite{lippe2023pde, wu2025triton}.

\begin{figure*}[h!]
  \centering
  \includegraphics[width=\textwidth]{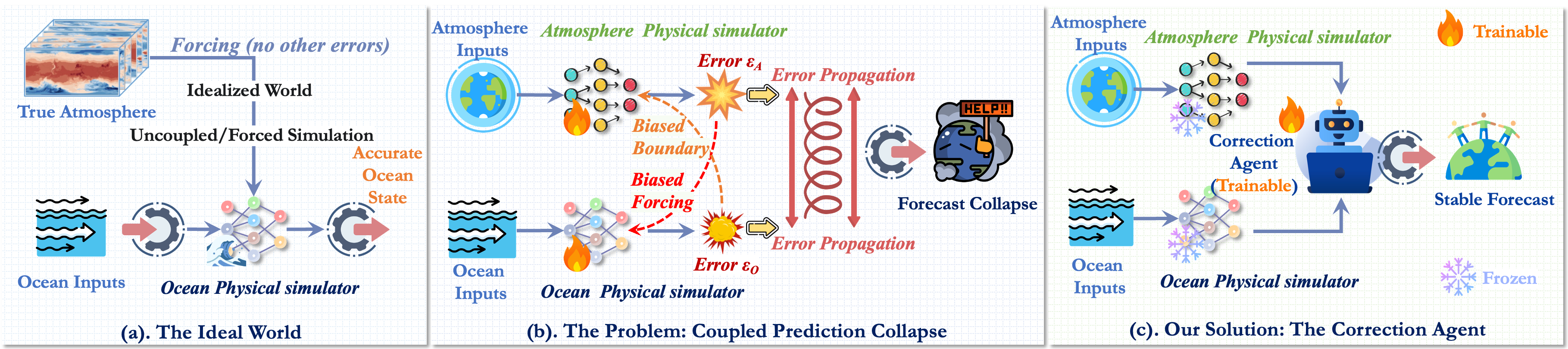}
\caption{
\textbf{Taming the vicious cycle of errors in coupled prediction with our \method{} framework.}
\textbf{(a) \textit{Ideal Uncoupled Simulation}:} A single simulator performs well when driven by perfect external forcing.
\textbf{(b) \textit{Coupled Prediction Collapse}:} In an autoregressive coupled mode, errors from each simulator feed into the other, leading to an exponential error growth (\textit{Reciprocal Error Amplification}) that ultimately collapses the system.
\textbf{(c) \textit{Our Decoupled Correction Solution}:} We freeze the base simulators and train only an external Correction Agent. This agent learns to heal the systematic biases of the coupled system, steering the unstable forecast trajectory back to the ground truth to achieve long-range stability.
}
\label{fig:intro_figure}
\vspace{-15pt}
\end{figure*}

Current strategies to address this challenge have significant limitations. End-to-end fine-tuning of the entire system often disrupts the valuable physical priors learned by each component during pre-training, leading to catastrophic forgetting~\cite{cai2021physics, wang2021physics, hess2022physically}. Crucially, such fine-tuning tends to overfit the model to correcting errors only within a specific, short pre-defined rollout, causing the system to collapse again once the forecast horizon extends beyond this fine-tuned range~\cite{chen2023fuxi, gao2025oneforecast, bi2023accurate, zhang2023skilful,kochkov2024neural}. On the other hand, simple statistical bias correction methods are inherently static and cannot capture the dynamic error patterns that emerge from the complex interactions within the coupled system~\cite{maraun2016bias, gawronski2004theory}.

To solve this problem, we propose a dual contribution. First, at the architectural level, we design a new, simple, yet powerful predictive model called \model{}. It is based on efficient axially gated convolutions and a differentiable semi Lagrangian advection, making it an ideal building block for complex physical simulators. Second, at the framework level, we propose a model-agnostic correction framework, \method{}, based on the core idea of decoupling simulation from error correction. \method{} is a universal framework for coupled systems and can be expanded to more spheres. For instance, in ocean-atmosphere coupling task, we demonstrate the remarkable flexibility of the \model{} architecture: we use two independent \model{} instances as \textit{physics engines} to predict the ocean and atmosphere, respectively. Innovatively, we use a third \model{} instance as a \textit{Correction Agent}. During coupled forecasting, we freeze the \textit{physics engines} and train only the \textit{Correction Agent}, which learns to counteract the systematic biases of the entire system.

We validate our approach on the highly challenging task of global ocean-atmosphere coupled forecasting and a case study on ocean-atmosphere-land coupled forecasting task. The system we build, using \model{} for all components, achieves unprecedented long-term stability, as visually demonstrated in Figure~\ref{fig:2x2_comparison}, where our framework corrects the baseline's significant predictive drift and maintains high fidelity to the ground truth over a 200-day forecast. Crucially, to demonstrate the generality of our framework, we further show that it seamlessly integrates with other state-of-the-art foundation models (e.g., GraphCast~\cite{lam2023learning}, SimVP~\citep{tan2025simvpv2}) and provides them with the same long-term stability. Our contributions are as follows:

\noindent\ding{224} \textit{\textbf{New Problem Formulation.}} We are the first to identify and formalize the problem of {\textit{Reciprocal Error Amplification}} in coupled spatiotemporal forecasting.

\noindent\ding{224} \textit{\textbf{Superior Performance.}} We design \model{}, an efficient and flexible base predictive architecture, and it achieves state-of-the-art performance in coupled spatiotemporal forecasting for Earth systems.

\noindent\ding{224} \textit{\textbf{Novel Framework.}} We propose \method{}, a model-agnostic, \textbf{plug-and-play Correction Agent} framework that provides a new paradigm for stabilizing coupled systems by decoupling simulation and correction.

\section{Related Work}

\subsection{Spatiotemporal Forecasting Models}
Spatiotemporal forecasting is a core problem in computer vision, with applications such as future video frame prediction~\cite{wang2022predrnn, shi2015convolutional} and human motion forecasting~\cite{wang2021simple, mao2022contact, tanke2023humans}. Early works~\cite{wu2024pastnet, wu2025neuralogcm, wu2026omniflow} often rely on Recurrent Neural Network (RNN) architectures like ConvLSTM~\cite{shi2015convolutional}, which use recurrent units to capture time dependencies in sequential data. As deep learning evolves, architectures based on Convolutional Neural Networks (CNNs) and Transformers become the mainstream~\cite{gao2022simvp, ronneberger2015u}. 3D CNNs~\cite{gao2022earthformer, chen2021multiple} can directly process spatiotemporal data cubes, while the Transformer~\cite{NIPS2017_3f5ee243, dosovitskiy2020image} architecture shows excellent ability in capturing long-range dependencies with its powerful self-attention mechanism. Further, Graph Neural Networks-based methods also achieve satisfactory performance by learning multi-scale dynamics~\cite{pfafflearning, fortunato2022multiscale}. To improve the realism of generated results and avoid blurriness in long-term forecasts, researchers also introduce Generative Adversarial Networks (GANs)~\cite{goodfellow2020generative, esser2021taming, gao2022generative} and diffusion models~\cite{rombach2022high, du2024conditional}. These advanced generative models achieve great success in prediction tasks for single, uncoupled systems. \textit{However, all these models suffer from compounding errors in autoregressive forecasting, which degrades long-term forecast fidelity. This challenge is severely exacerbated in coupled systems, where multiple interacting models exchange and amplify their errors with each other.}

\subsection{Data-Driven Earth System Forecasting}
In recent years, deep learning  has brought a paradigm shift to Earth system science~\cite{reichstein2019deep,jia2025learning}. A series of milestone works, represented by FourCastNet~\cite{kurth2023fourcastnet}, Pangu-Weather~\cite{bi2023accurate}, and GraphCast~\cite{lam2023learning}, surpass traditional numerical models on multiple medium-range weather forecasting metrics. These models demonstrate the great potential of AI in handling high-dimensional, complex physical dynamics. Despite this, the vast majority of these SOTA models still focus on uncoupled, single-domain prediction (e.g., atmosphere-only)~\cite{bi2023accurate, zhang2023skilful, lam2023learning, gao2025oneforecast, wu2025triton, gao2022earthformer, wu2023earthfarseer, cui2025forecasting, xiong2023ai}. However, many key climate phenomena in the Earth system, such as the El Ni\~no-Southern Oscillation (ENSO)~\cite{ham2019deep, zhu2025projection}, are essentially driven by the {complex interactions between multiple domains}, primarily the ocean and atmosphere. Therefore, building {coupled AI models} that can stably simulate these interactions becomes the frontier of the field. Although some pioneering works explore this~\cite{cresswell2025deep, wang2024coupled}, these early attempts commonly face severe {model drift and long-term instability}. Their coupling mechanisms are also often model-specific and lack generality. \textit{Thus, a general coupling framework that enables long-term, stable forecasting remains a critical challenge to be solved.}

\subsection{Strategies for Mitigating Compounding Errors}
Academia has explored various error suppression strategies, but they have fundamental limitations in coupled scenarios. Architectural approaches with physical priors~\cite{cai2021physics, wang2021physics, wu2025spatiotemporal, hess2022physically, karniadakis2021physics, wu2025differential, wu2025turb} can enhance single-model stability, but they cannot prevent the cross-contamination of errors through the coupling interface. Advanced training strategies, such as multi-step fine-tuning~\cite{bi2023accurate, wu2025triton, gao2025oneforecast, gao2025neuralom}, tend to overfit the model to correcting short-term errors, causing it to collapse in true long-range forecasting as errors inevitably compound. Finally, post-processing statistical bias correction is limited by its static nature~\cite{ross2020improved, swenson2006post, sanchez2021deepemhancer}: \textit{it assumes stable error patterns, whereas errors in coupled systems are dynamic and state-dependent, making such static corrections ineffective and sometimes even detrimental to prediction skill.}

\paragraph{\textit{\ding{224} Our Positioning}.}Existing methods fail to fundamentally address the problem of Reciprocal Error Amplification in coupled systems, creating a critical need for a general and modular solution. Our work fills this gap by proposing a new paradigm of decoupling simulation and correction, and by designing a \textbf{\textit{plug-and-play Correction Agent}}. This provides a new and extensible pathway for building stable and reliable coupled spatiotemporal forecasting systems.

\section{Problem Formulation}
Spatiotemporal forecasting aims to predict a sequence of future frames $(X_{t+1}, \dots, X_{t+T})$ given a history of $K$ frames $(X_{t-K+1}, \dots, X_t)$. This is equivalent to learning the conditional probability distribution $P(X_{t+1}, \dots, X_{t+T} | X_{t-K+1}, \dots, X_t)$. Under the Markov assumption, this problem reduces to learning a single-step predictive model, $F_\theta$, which parameterizes the conditional probability $P(X_{t+1} | X_t; \theta)$. The model is typically trained by maximizing the log-likelihood on observed sequences:
\begin{equation}
\label{eq:mle}
\max_{\theta} \sum_{t} \log P(X_{t+1} | X_t; \theta)
\end{equation}
During inference, future frames are generated by autoregressively sampling from the learned distribution, i.e., $\hat{X}_{t+1} \sim P(\cdot | \hat{X}_t; \theta)$. Since $F_\theta$ is only an approximation of the true data distribution, errors from the sampling step accumulate, causing the condition $\hat{X}_t$ to drift from the training distribution. This phenomenon leads to \textbf{\textit{compounding errors}}.

We extend this framework to a more challenging setting: \textbf{\textit{coupled spatiotemporal forecasting}}. For instance, given two interdependent systems, A and B, with states $X_t^A$ and $X_t^B$, the goal is to learn the joint conditional probability $P(X_{t+1}^A, X_{t+1}^B | X_t^A, X_t^B)$. For notational simplicity, we do not explicitly distinguish the coupling-relevant components of each subsystem from the corresponding full state variables ($X_t^A$ and $X_t^B$). This is often approximated by two interdependent models, $F^A$ and $F^B$:
\begin{align}
    \hat{X}_{t+1}^A &\sim P(\cdot | \hat{X}_t^A, \hat{X}_t^B; \theta_A) \label{eq:coupled_a} \\
    \hat{X}_{t+1}^B &\sim P(\cdot | \hat{X}_t^B, \hat{X}_t^A; \theta_B) \label{eq:coupled_b}
\end{align}
In this coupled setting, error propagation is more complex. Sampling error from system A not only degrades its own future predictions but also injects a {noisy condition} into system B's model, distorting its predictive distribution. We term this cross-system, iterative contamination of predictive distributions \textbf{\textit{Reciprocal Error Amplification} (REA)}.

To mitigate REA, we introduce an independent correction network, $C_\phi$, which learns a posterior correction over the predictions of the base models. The network $C_\phi$ is trained to map the expected predictions from the base networks to a corrected, higher-fidelity output:
\begin{equation}
\label{eq:correction}
(\tilde{X}_{t+1}^A, \tilde{X}_{t+1}^B) = C_{\phi}\left( \mathbb{E}[\hat{X}_{t+1}^A], \mathbb{E}[\hat{X}_{t+1}^B] \right)
\end{equation}
where $\mathbb{E}[\cdot]$ denotes the expectation over the distributions in Eqs.~(\ref{eq:coupled_a}) and (\ref{eq:coupled_b}). By minimizing the relative $L_2$-norm loss, $\mathcal{L} = ||(\tilde{X}_{t+1}^A, \tilde{X}_{t+1}^B) - (X_{t+1}^A, X_{t+1}^B)||_2$, $C_\phi$ learns to counteract the systematic biases induced by REA.

\begin{figure*}[h!]
  \centering
  \includegraphics[width=\textwidth]{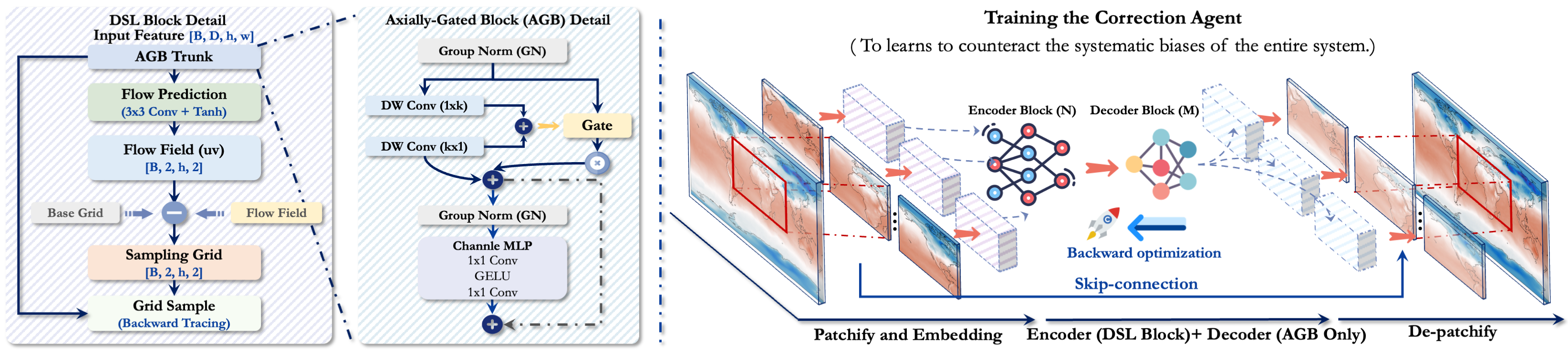}
  \caption{
    \textbf{Overview of the \model{} Architecture.}
    \textbf{(Left)} Our core innovation is the {Differentiable Semi-Lagrangian Advection Block (DSL-Block)}, which explicitly models the physical advection process. The block first predicts a {Flow Field}, which defines a {Sampling Grid} via {Backward Tracing} from a {Base Grid}. A differentiable {Grid} operation then warps the input features.
    \textbf{(Center)} The architecture is built upon the efficient {Axially-Gated Block (AGB)}, which extracts features using parallel depthwise-separable axial convolutions ($1 \times k$ and $k \times 1$) and a learned {Gate}.
    \textbf{(Right)} The overall model is an encoder-decoder network. After the input is processed by {Patchify and Embedding}, the encoder (containing DSL-Blocks and AGBs) captures dynamic features, while the decoder (with AGBs only) and refinement use Skip-connection to reconstruct the high-resolution output. The model is trained end-to-end via {Backward optimization}.}
\label{fig:main_cvpr}
\vspace{-10pt}
\end{figure*}

\section{Method}
To address the problem of Reciprocal Error Amplification in coupled forecasting, we propose the \method{}, a framework that completely decouples physical simulation from error correction. As shown in Figure~\ref{fig:intro_figure}(c), our method preserves the integrity of pre-trained physics engines by freezing their parameters and exclusively trains a correction agent to ensure long-term forecast stability. We also theoretically proved its effectiveness in terms of temporal and spatial dimensions, see the Appendix~\ref{app:Theoretical} for more details. Crucially, our framework is model-agnostic, as the physics engines can be any advanced spatiotemporal prediction models, allowing for seamless integration with other foundational models. In our implementation, the Correction Agent is an instance of our \model{} architecture. To demonstrate its versatility, we also employ additional \model{} instances to serve as the physics engines. 

\subsection{Decoupled Simulation-Correction Framework}
The \method{} framework is designed to circumvent catastrophic forgetting and short-term overfitting inherent in end-to-end finetuning. Its core principle is the decoupling of responsibilities: we freeze pre-trained physics engines to simulate the core dynamics, while a correction agent is exclusively trained to counteract the systematic biases arising from their interaction. This process is divided into two stages (below is an example of two-sphere coupling).

\paragraph{Stage 1: Pre-training Physics Engines}
First, we independently train the atmosphere simulator $F^A_{\theta_A}$ and the ocean simulator $F^B_{\theta_B}$. In this idealized, uncoupled setting (visualized in Figure~\ref{fig:intro_figure}a), each model learns to predict the next true state $X_{t+1}$ from the current state $X_t$ and ground-truth boundary conditions $B_t$. Their respective parameters, $\theta_A$ and $\theta_B$, are optimized by minimizing their corresponding relative $L_2$-norm losses:
\begin{align}
\mathcal{L}_{A}(\theta_A) &= \mathbb{E}_{t, X} \left[ \| F^A_{\theta_A}(X_t^A, B_t^A) - X_{t+1}^A \|_2 \right] \label{eq:pretrain_A} \\
\mathcal{L}_{B}(\theta_B) &= \mathbb{E}_{t, X} \left[ \| F^B_{\theta_B}(X_t^B, B_t^B) - X_{t+1}^B \|_2 \right] \label{eq:pretrain_B}
\end{align}
This stage yields two expert models that have learned the fundamental dynamics of their respective domains.

\paragraph{Stage 2: Training the Correction Agent}
This stage is the core of our framework, with its training paradigm illustrated in Figure~\ref{fig:intro_figure}c. We {freeze} the parameters $\theta = \{\theta_A, \theta_B\}$ of the physics engines and train the correction agent $C_{\phi}$ through an autoregressive {predict-then-correct} loop. At each timestep $t$, this loop consists of two steps:

\noindent\textit{\textbf{Prediction Step}.} The coupled model, which we denote as a single operator $\mathcal{F}_{\theta}$, receives the {corrected} state from the previous step, $\tilde{X}_t$, and generates a preliminary but biased prediction, $\hat{X}_{t+1}$:
\begin{equation}
\hat{X}_{t+1} = \mathcal{F}_{\theta}(\tilde{X}_t) \equiv \left( F^A_{\theta_A}(\tilde{X}_t^A, \tilde{X}_t^B), \ F^B_{\theta_B}(\tilde{X}_t^B, \tilde{X}_t^A) \right)
\end{equation}
This step explicitly simulates the cross-contamination of errors shown in Figure~\ref{fig:intro_figure}b.

\noindent\textit{\textbf{Correction Step}.}~The correction agent $C_{\phi}$ takes this biased prediction $\hat{X}_{t+1}$ as input and outputs a corrected state $\tilde{X}_{t+1}$, effectively steering the forecast trajectory back towards the ground truth:
\begin{equation}
\tilde{X}_{t+1} = C_{\phi}(\hat{X}_{t+1})
\end{equation}

The parameters $\phi$ of the correction agent are the only parameters optimized during this stage. The training objective is to minimize the relative $L_2$-norm between the corrected state and the true state:
\begin{equation}
\label{eq:corrector_loss_final}
\mathcal{L}_{\text{corrector}}(\phi) = \mathbb{E}_{t, X} \left\| C_{\phi}(\mathcal{F}_{\theta}(\tilde{X}_t)) - X_{t+1} \right\|_2
\end{equation}
Critically, the corrected state $\tilde{X}_{t+1}$ is then fed back as the input for the next timestep, closing the loop. This forces $C_{\phi}$ to learn beyond correcting static, single-step biases; instead, it learns to counteract the dynamic, state-dependent error patterns of the entire coupled system.

\subsection{The \model{} Architecture}
\model{} is the core predictive architecture designed for the \method{} framework. It is a convolution-based neural network that combines an efficient, general-purpose feature extractor with a specialized block for explicitly modeling a key physical process. The design is motivated by the practical requirements of coupled Earth-system forecasting, where a model must be accurate enough to capture fine-grained spatial structures, stable enough for long autoregressive rollouts, and efficient enough to be instantiated multiple times as domain-specific engines and correction agents. Rather than relying on computationally expensive global attention, \model{} emphasizes locality-aware convolutional operators, gated feature modulation, and physics-inspired feature transport. This makes it a suitable backbone for both standalone spatiotemporal prediction and the plug-and-play correction setting considered in this work.

\paragraph{Overall Architecture.} As shown in Figure~\ref{fig:main_cvpr}, \model{} employs an encoder-decoder backbone with skip-connections. Given an input tensor $X \in \mathbb{R}^{B \times C_{in} \times H \times W}$, a convolutional embedding layer, $\text{Conv}_{patch}$, maps it into a sequence of tokens. This is then added to a fixed sinusoidal positional encoding, $P_{pos}$, to yield the initial 2D feature map $Z_0 \in \mathbb{R}^{B \times D \times h \times w}$. To further embed a geographical prior, this feature map is augmented with an Earth-Aware Latitudinal Positional Encoding, derived from a learned projection of the sine and cosine of each grid point's latitude. The encoder, $\mathcal{F}_{enc}$, is a stack of $N$ of our proposed DSL-Blocks and AGBs. The decoder, $\mathcal{F}_{dec}$, is composed of $M$ AGBs and reconstructs spatial details from the encoded feature representation. In our settings, N=16 and M=8. Finally, a transposed convolution layer $\mathcal{R}$ upsamples the decoder's output $Z_{out}$ to produce the prediction $\hat{Y} = \mathcal{R}(Z_{out})$. {This prediction is then sharpened by a lightweight refinement module. The module computes a residual correction based on the concatenation of the output $\hat{Y}$ and the original input $X$, which is applied through a zero-initialized learnable gate to ensure training stability.}

\paragraph{Axially-Gated Block.}
The AGB is the fundamental building block of \model{}, designed to efficiently capture spatial dependencies via axial decomposition. For an input feature map $F_{in}$, its computation proceeds as follows:
\begin{align}
    U &= \text{GN}(F_{in}) \\
    F_{axial} &= C_{dw}^{1 \times k}(U) + C_{dw}^{k \times 1}(U) \\
    G &= \sigma(C_g(U)) \\
    F'_{res} &= F_{in} + G \odot C_{mix}(F_{axial}) \\
    F_{out} &= F'_{res} + \text{MLP}_C(\text{GN}(F'_{res}))
\end{align}
where $\text{GN}$ is Group Normalization, $C_{dw}^{1 \times k}$ and $C_{dw}^{k \times 1}$ are parallel depthwise-separable axial convolutions, $C_g$ and $C_{mix}$ are $1 \times 1$ convolutions, $\sigma$ is sigmoid, $\odot$ denotes element-wise multiplication, and $\text{MLP}_C$ is a channel-wise MLP.

\paragraph{Differentiable Semi-Lagrangian Advection Block.}
The DSL-Block introduces an inductive bias for advection. For an input $F_{in}$, it first extracts features $F_{feat} = \mathcal{F}_{AGB}(F_{in})$ using an AGB trunk. A $3 \times 3$ convolution, $C_{flow}$, then operates on $F_{feat}$ to estimate a displacement flow field $\mathbf{u}$:
\begin{equation}
    \mathbf{u} = u_{max} \cdot \tanh(C_{flow}(F_{feat}))
\end{equation}
where $u_{max}$ is a predefined hyperparameter for the maximum displacement. We employ a semi-Lagrangian approach, performing backward tracing by subtracting the flow field from a normalized base grid $\mathbf{G}_{base}$ to compute a sampling grid:
\begin{equation}
    \mathbf{G}_{sample} = \mathbf{G}_{base} - \mathbf{u}
\end{equation}
Using a differentiable bilinear interpolation operator $\mathcal{W}$, we warp the feature map $F_{feat}$ according to the sampling grid, yielding $F_{warped} = \mathcal{W}(F_{feat}, \mathbf{G}_{sample})$, which reflects the advective motion. Finally, these physically-informed features are integrated back into the network path via a separate gating mechanism:
\begin{align}
    G_{w} &= \sigma(C_{gw}(F_{feat})) \\
    F'_{res} &= F_{in} + G_{w} \odot C_{mixw}(F_{warped}) \\
    F_{out} &= F'_{res} + \text{MLP}_C(\text{GN}(F'_{res}))
\end{align}
where the gate $G_w$ is conditioned on the original features $F_{feat}$ but modulates the warped features $F_{warped}$. 


    
    
        


\section{Experiments}

\begin{table*}[t]
    \caption{In the global ocean and weather forecasting task, we compare the performance of our \method{} with 5 baselines. The average results for all 162 variables of normalized RMSE and MAE. A small RMSE ($\downarrow$) and MAE ($\downarrow$) indicate better performance. Relative improvements are shown with percentage. The best results are in \textbf{bold}, and the second best are with \underline{underline}.
    }
    \label{tab:mainres}
    \centering
    \begin{small}
        \begin{sc}
            \renewcommand{\multirowsetup}{\centering}
            \setlength{\tabcolsep}{1.2pt}
            \begin{tabular}{l|cccccccccc}
                \toprule
                \multirow{3}{*}{Models} & \multicolumn{10}{c}{Metrics} \\
                \cmidrule{2-11}
                & \multicolumn{2}{c|}{25-day} & \multicolumn{2}{c|}{120-day} & \multicolumn{2}{c|}{180-day} & \multicolumn{2}{c|}{240-day} & \multicolumn{2}{c}{300-day} \\
                & RMSE & \multicolumn{1}{c|}{MAE} & RMSE & \multicolumn{1}{c|}{MAE} & RMSE & \multicolumn{1}{c|}{MAE} & RMSE & \multicolumn{1}{c|}{MAE} & RMSE & MAE \\
                \midrule
                ConvLSTM                         & 1.2369   & \multicolumn{1}{c|}{0.9725}   & 1.7426            & \multicolumn{1}{c|}{1.4190}            & 1.8031            & \multicolumn{1}{c|}{1.4648}            & 1.8229            & \multicolumn{1}{c|}{1.4832}            & 1.8190            & 1.4858            \\
                \multirow{2}{*}{ConvLSTM + PnP}  & 0.9376   & \multicolumn{1}{c|}{0.7005}   & 1.4525            & \multicolumn{1}{c|}{1.1753}            & 1.4788            & \multicolumn{1}{c|}{1.1944}            & 1.4632            & \multicolumn{1}{c|}{1.1820}            & 1.4443            & 1.1707            \\
                                                 & +24.20\% & \multicolumn{1}{c|}{+27.96\%} & +16.65\%          & \multicolumn{1}{c|}{+17.17\%}          & +17.99\%          & \multicolumn{1}{c|}{+18.46\%}          & +19.73\%          & \multicolumn{1}{c|}{+20.31\%}          & +20.60\%          & +21.20\%          \\
                \midrule
                SimVP                            & 1.3516   & \multicolumn{1}{c|}{1.0454}   & 1.3565            & \multicolumn{1}{c|}{1.0947}            & 1.3897            & \multicolumn{1}{c|}{1.1237}            & 1.3882            & \multicolumn{1}{c|}{1.1260}            & 1.3666            & 1.1188            \\
                \multirow{2}{*}{SimVP + PnP}     & 0.7385   & \multicolumn{1}{c|}{0.5160}   & 1.0225            & \multicolumn{1}{c|}{0.7161}            & 1.0606            & \multicolumn{1}{c|}{0.7540}            & 1.0564            & \multicolumn{1}{c|}{0.7520}            & 1.0420            & 0.7495            \\
                                                 & +45.36\% & \multicolumn{1}{c|}{+50.64\%} & +24.62\%          & \multicolumn{1}{c|}{+34.59\%}          & +23.69\%          & \multicolumn{1}{c|}{+32.90\%}          & +23.90\%          & \multicolumn{1}{c|}{+33.22\%}          & +23.75\%          & +33.01\%          \\
                \midrule
                GraphCast                        & 0.7474   & \multicolumn{1}{c|}{0.5173}   & 1.1756            & \multicolumn{1}{c|}{0.8562}            & 1.4678            & \multicolumn{1}{c|}{1.0884}            & 1.6323            & \multicolumn{1}{c|}{1.2266}            & 1.6985            & 1.2844            \\
                \multirow{2}{*}{GraphCast + PnP} & 0.7323   & \multicolumn{1}{c|}{0.5035}   & 0.9822            & \multicolumn{1}{c|}{0.7200}            & 1.1006            & \multicolumn{1}{c|}{0.8192}            & 1.2261            & \multicolumn{1}{c|}{0.9220}            & 1.3208            & 1.0006            \\
                                                 & +2.02\%  & \multicolumn{1}{c|}{+2.67\%}  & +16.45\%          & \multicolumn{1}{c|}{+15.91\%}          & +25.01\%          & \multicolumn{1}{c|}{+24.73\%}          & +24.88\%          & \multicolumn{1}{c|}{+24.84\%}          & +22.24\%          & +22.10\%          \\
                \midrule
                Ola                              & 0.9246  & \multicolumn{1}{c|}{0.5684}   & \textgreater{}100 & \multicolumn{1}{c|}{\textgreater{}100} & \textgreater{}100 & \multicolumn{1}{c|}{\textgreater{}100} & \textgreater{}100 & \multicolumn{1}{c|}{\textgreater{}100} & \textgreater{}100 & \textgreater{}100 \\
                \multirow{2}{*}{Ola + PnP}       & 0.7754   & \multicolumn{1}{c|}{0.5506}   & 33.2440           & \multicolumn{1}{c|}{25.7163}           & 46.0141           & \multicolumn{1}{c|}{43.7024}           & 46.0422           & \multicolumn{1}{c|}{43.7304}           & 46.0645           & 43.7536           \\
                                                 & +16.14\% & \multicolumn{1}{c|}{+3.14\%} & +98.72\%          & \multicolumn{1}{c|}{+92.06\%}          & +99.96\%          & \multicolumn{1}{c|}{+99.74\%}          & +100.00\%         & \multicolumn{1}{c|}{+99.99\%}          & +100.00\%         & +100.00\%         \\
                \midrule
                CirT                             & 1.6692   & \multicolumn{1}{c|}{0.9218}   & 2.2792            & \multicolumn{1}{c|}{1.1511}            & 2.3495            & \multicolumn{1}{c|}{1.1909}            & 2.3296            & \multicolumn{1}{c|}{1.1775}            & 2.2969            & 1.1532            \\
                \multirow{2}{*}{CirT + PnP}      & 1.5209   & \multicolumn{1}{c|}{0.8963}   & 2.0074            & \multicolumn{1}{c|}{1.1103}            & 2.0770            & \multicolumn{1}{c|}{1.1547}            & 2.0504            & \multicolumn{1}{c|}{1.1350}            & 1.9998            & 1.0969            \\
                                                 & +8.88\% & \multicolumn{1}{c|}{+2.76\%}  & +11.93\%          & \multicolumn{1}{c|}{+3.54\%}           & +11.60\%          & \multicolumn{1}{c|}{+3.04\%}           & +11.98\%          & \multicolumn{1}{c|}{+3.61\%}           & +12.94\%          & +4.88\%           \\
                \midrule
                DSLCast                          & \underline{0.7286}   & \multicolumn{1}{c|}{\underline{0.4983}}   & \underline{0.8961}            & \multicolumn{1}{c|}{\underline{0.6362}}            & \underline{0.9328}            & \multicolumn{1}{c|}{\underline{0.6635}}            & \underline{0.9798}            & \multicolumn{1}{c|}{\underline{0.7009}}            & \underline{1.0012}            & \underline{0.7169}            \\
                \multirow{2}{*}{DSLCast + PnP}   & \textbf{0.7196}   & \multicolumn{1}{c|}{\textbf{0.4933}}   & \textbf{0.8781}            & \multicolumn{1}{c|}{\textbf{0.6292}}            & \textbf{0.9181}            & \multicolumn{1}{c|}{\textbf{0.6582}}            & \textbf{0.9625}            & \multicolumn{1}{c|}{\textbf{0.6946}}            & \textbf{0.9785}            & \textbf{0.7100}            \\
                                                 & +1.24\%  & \multicolumn{1}{c|}{+1.01\%}  & +2.00\%           & \multicolumn{1}{c|}{+1.11\%}           & +1.58\%           & \multicolumn{1}{c|}{+0.81\%}           & +1.77\%           & \multicolumn{1}{c|}{+0.90\%}           & +2.27\%           & +0.97\%           \\
                \bottomrule
                \end{tabular}
                
        \end{sc}
    \end{small}
        
\end{table*}

\begin{figure*}[h!]
        \centering
        \includegraphics[width=1\linewidth]{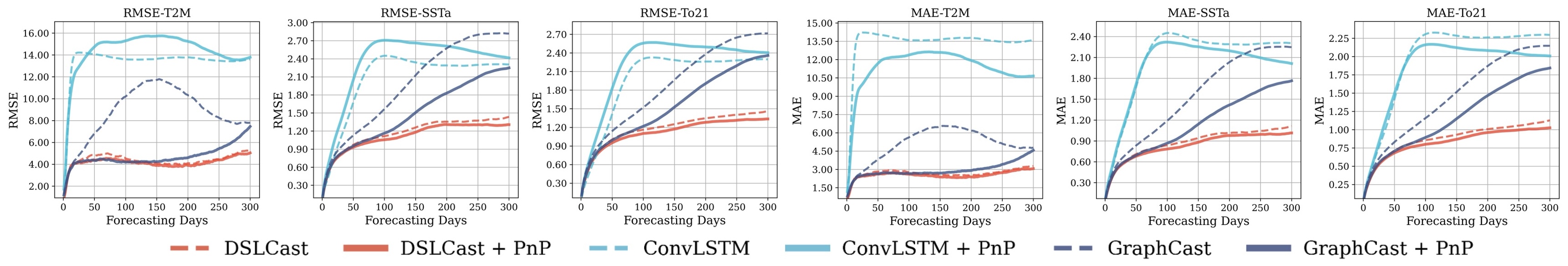}
        \caption{The latitude-weighted RMSE and MAE results of several important variables. Across these representative variables, \method{} consistently reduces errors over long lead times, highlighting its improved rollout stability.}
        \label{fig_singe_var}
\end{figure*}

\begin{figure*}[h!]
    \centering
    \includegraphics[width=1.01\linewidth]{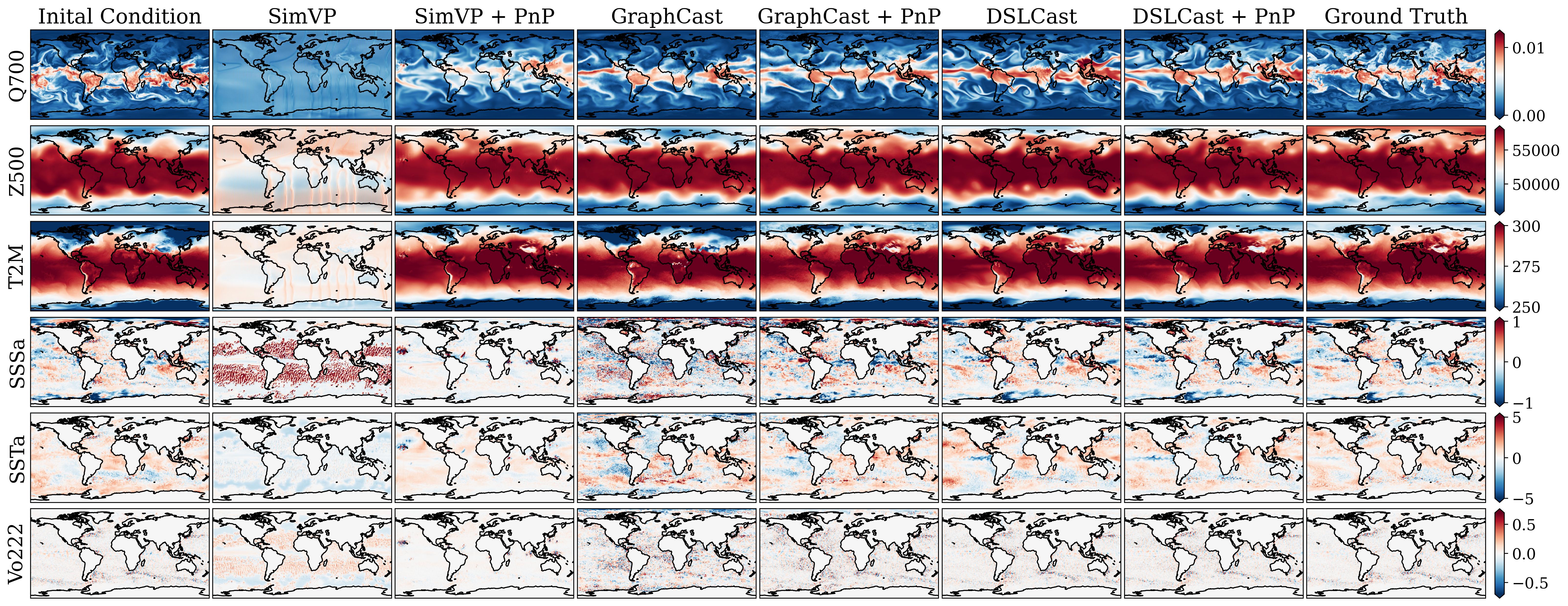}
    \vspace{-10pt}
    \caption{100-day forecast results of different models using our proposed \method{} framework. Our method achieves better physical consistency and yields results that are closest to the ground truth.}
    \label{fig:visual_results}
    \vspace{-10pt}
\end{figure*}

In this section, we extensively evaluate the performance of \method{}, covering metric results, visual results, efficiency analysis, and extreme event analysis.

\subsection{Benchmarks and Baselines}
    \textbf{\ding{182} Benchmark Setup}~ Our experimental benchmark is established on the ERA5~\cite{hersbach2020era5} dataset for atmospheric variables and the GLORYS12 dataset for oceanic variables. The atmospheric component comprises $69$ state variables, including geopotential, specific humidity, temperature, and U/V wind components at various pressure levels, plus four key surface variables. The oceanic component includes $93$ state variables, such as salinity, velocity, temperature at multiple depths, and sea surface height. In the experiments validating the extensibility of the framework, the land data across four depth levels is also sourced from ERA5. To create a standardized evaluation setting, all experiments are conducted at a $1$-degree spatial resolution and a $24$-hour temporal resolution. Before putting the data into the network, we normalize the variables according to their mean and std. More data processing details refer to the Appendix.

    \noindent\textbf{\ding{183} Baselines and Evaluation Protocol}~ To demonstrate the universal applicability and effectiveness of our method, we select diverse baseline models to serve as the foundational physics engines. These include spatiotemporal models \textit{\textbf{ConvLSTM}}~\citep{shi2015convolutional} and \textit{\textbf{SimVP}}~\citep{tan2025simvpv2}; and domain-specific models for weather/ocean forecasting, namely \textit{\textbf{GraphCast}}~\citep{lam2023learning},
    \textit{\textbf{Ola}}~\citep{wang2024coupled}, and \textit{\textbf{CirT}}~\citep{liu2025cirt}. Our core evaluation protocol involves first training these baselines, then freezing their parameters and applying our \textit{PnP-Corrector} as a plug-and-play enhancement. By comparing the performance of each model \textit{\underline{before}} and \textit{\underline{after}} the integration of our corrector, we can directly measure the performance gains it provides. 

    \noindent\textbf{\ding{184} Evaluation Metrics}~ We use Root Mean Square Error (RMSE), Mean Absolute Error (MAE), and Anomaly Correlation Coefficient (ACC) to assess forecast skill. And we use Critical Success Index (CSI) and Symmetric Extremal Dependence Index (SEDI) for extreme event evaluation.

\begin{table}[t]
\caption{ACC results of three of the best baselines using our \method{} for 93 ocean variables. A better ACC ($\uparrow$) indicate better performance.}
    \label{tab:mainres_acc}
    \vspace{-10pt}
    \vskip 0.13in
    \centering
    \begin{small}
        \begin{sc}
            \renewcommand{\multirowsetup}{\centering}
            \setlength{\tabcolsep}{3pt} 
           \begin{tabular}{l|cccc}
                \toprule
                \multirow{2}{*}{Models}          & \multicolumn{4}{c}{ACC}                                                               \\ \cmidrule{2-5} 
                                                 & 20-day              & 25-day              & 35-day              & 45-day              \\ \midrule
                SimVP                            & 0.2271              & 0.0778              & 0.0093              & \textless{}0        \\
                \multirow{2}{*}{SimVP + PnP}     & 0.6061              & 0.5365              & 0.4175              & 0.3030              \\
                                                 & \textgreater{}100\% & \textgreater{}100\% & \textgreater{}100\% & \textgreater{}100\% \\ \midrule
                GraphCast                        & 0.5177              & 0.4488              & 0.3496              & 0.2817              \\
                \multirow{2}{*}{GraphCast + PnP} & 0.6026              & 0.5352              & 0.4325              & 0.3600              \\
                                                 & 16.40\%             & 19.26\%             & 23.72\%             & 27.79\%             \\ \midrule
                OLA                              & 0.4980              & 0.4082              & 0.2557              & 0.1119              \\
                \multirow{2}{*}{OLA + PnP}       & 0.5305              & 0.4514              & 0.3353              & 0.2539              \\
                                                 & 6.51\%              & 10.59\%             & 31.11\%             & \textgreater{}100\% \\ \midrule
                DSLCast                          & 0.6387              & 0.5744              & 0.4779              & 0.4046              \\
                \multirow{2}{*}{DSLCast + PnP}   & 0.6433              & 0.5789              & 0.4802              & 0.4093              \\
                                                 & 0.72\%              & 0.77\%              & 0.48\%              & 1.17\%              \\ \bottomrule
                \end{tabular}
        \end{sc}
    \end{small}
    \vspace{-10pt}
\end{table}

\begin{figure}[h]
    \centering
    \includegraphics[width=\linewidth]{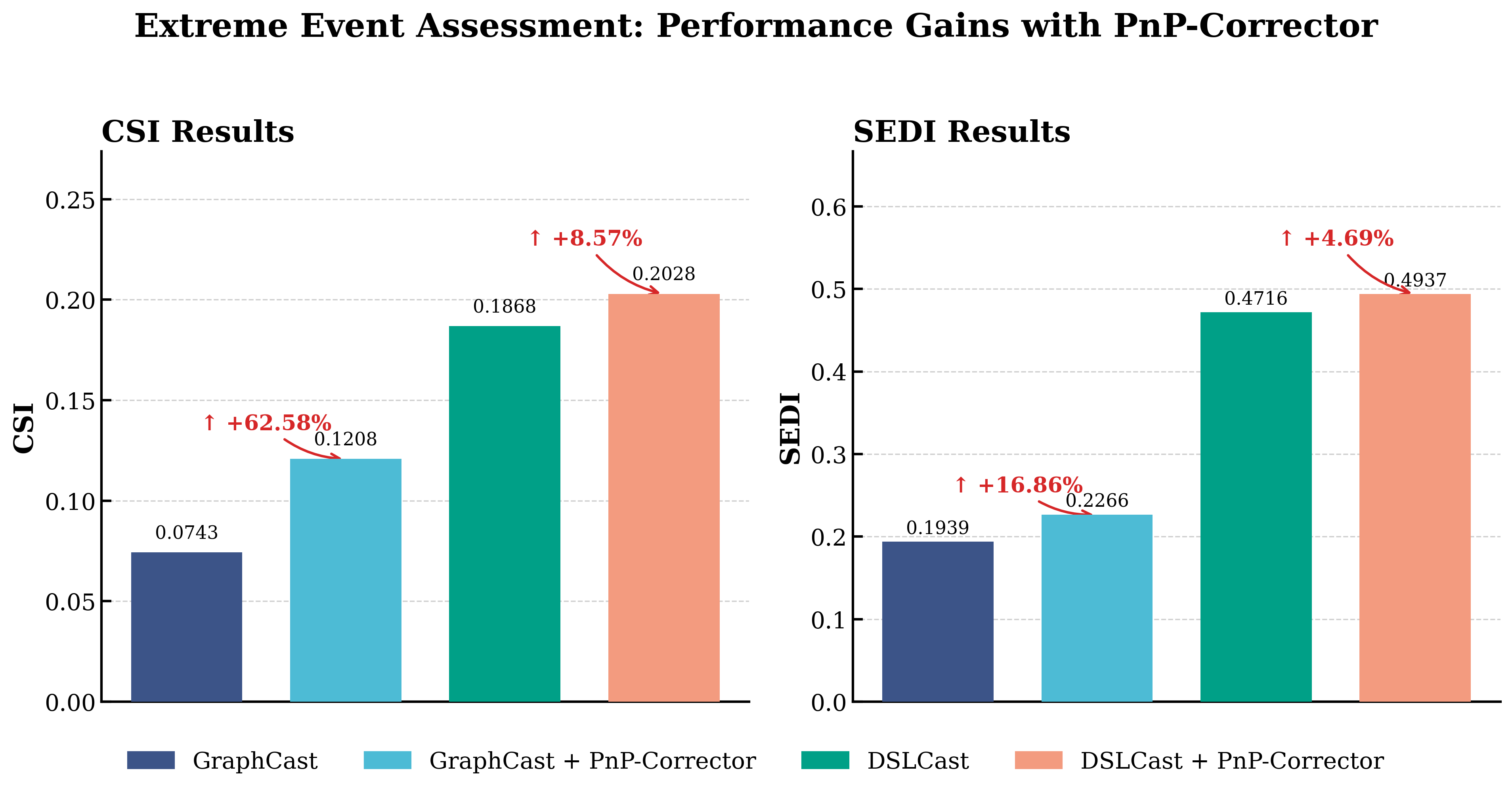}
    \caption{Performance evaluation on extreme event. The bar charts compare the CSI and SEDI for baselines (GraphCast and DSLCast) against their counterparts enhanced by \method{}.}
    \label{fig:extreme_events_assessment}
    \vspace{-15pt}
\end{figure}
        
\subsection{Coupled Spatiotemporal Forecasting}
As shown in Table~\ref{tab:mainres}, we report the forecasting for 162 atmosphere and ocean variables using the average results of 60 initial conditions (ICs), our \method{} framework consistently enhances the long-range forecast stability across all baselines. And we observe that, for many cases, this enhancement grows with the forecast lead time; for instance, the RMSE improvement for GraphCast increases from 16.45\% at 120 days to 25.01\% at 180 days. As depicted in Figure~\ref{fig_singe_var}, we show RMSE and MAE results as a function of time for important variables of some representative baselines, and our \method{} is also competitive. Further, as shown in Table~\ref{tab:mainres_acc}, we show the ACC results (average of 60 ICs) for different baselines that rank better on this metric to access the forecasting performance. We observe that the proposed \method{} improves the predictability of all baselines. Figure~\ref{fig:visual_results} visually corroborates these findings, showing that our enhanced forecasts maintain high fidelity to the ground truth over the 100-day period, which further demonstrate the effectiveness of \method{}.

\subsection{Analysis of Extreme Event} 
Beyond average metrics, a model’s ability to predict extreme events is critical for 60 ICs. We employ CSI and SEDI to evaluate its performance on extreme event. The results are shown in Figure~\ref{fig:extreme_events_assessment}, which quantitatively demonstrates the effectiveness of our \method{}. Specifically, when applied to the GraphCast baseline, the framework improves the CSI from $0.0743$ to $0.1208$ (a $62.58$\% increase) and boosts the SEDI score from $0.1939$ to $0.2266$ (a $16.86$\% increase). A similar trend is observed for the stronger DSLCast baseline, where the \method{} yields improvements of SEDI ($+4.69$\%). This consistent enhancement across different backbone models and metrics confirms that our method significantly improves the reliability of predicting high-impact, extreme weather phenomena.

\subsection{Analysis of Long-Range Physical Realism}
Figure~\ref{fig:energy_spectrum_comprehensive} presents a comprehensive spectral analysis of the 300-day forecast across multiple key atmospheric variables.
A consistent pattern of failure emerges: without correction, all baseline models regardless of their architecture suffer from severe spectral decay at this extreme lead time, particularly at medium to high wavenumbers.
This is a classic symptom of the \textit{Reciprocal Error Amplification} we aim to solve, where compounding errors yield overly smooth and physically implausible states.
In stark contrast, our model-agnostic \method{} universally rectifies this physical inconsistency across a diverse set of backbones (such as DSLCast, GraphCast, ConvLSTM, and CirT).
The corrected spectra align remarkably well with both the ground truth and the theoretical $k^{-3}$ reference slope, even after $300$ days of autoregressive rollout.

\begin{figure*}[t]
    \centering
    \includegraphics[width=\linewidth]{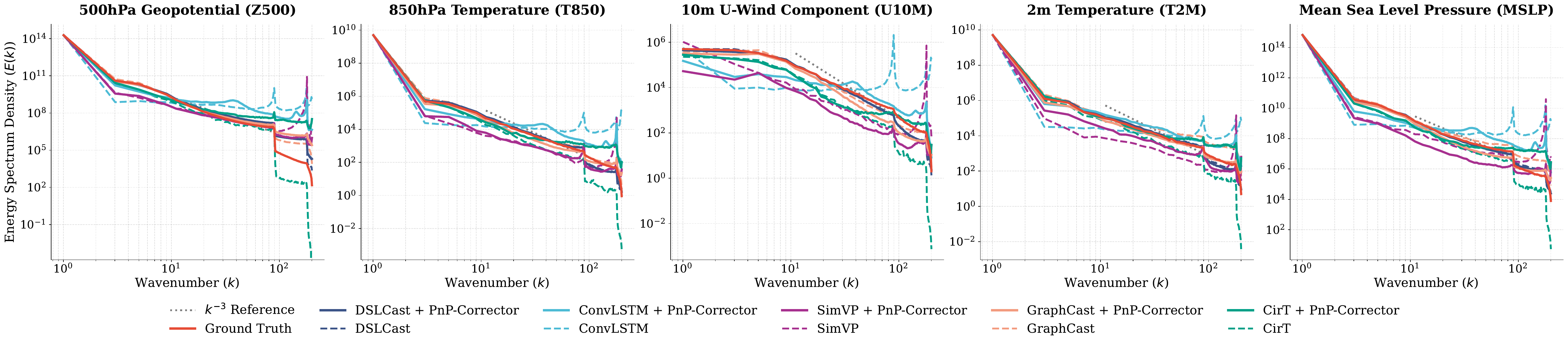}
    \vspace{-15pt}
    \caption{
        \textbf{Comprehensive Spectral Fidelity of 300-day Forecasts.} 
        Our \method{} framework restores physically realistic energy spectra across multiple key atmospheric variables (Z500, T850, U10M, T2M, MSLP). 
        The corrected models (solid lines) consistently align with the Ground Truth, demonstrating a universal improvement over the uncorrected baselines (dashed lines).
    }
    \label{fig:energy_spectrum_comprehensive}
    \vspace{-10pt}
\end{figure*}

\begin{figure}[ht]
    \centering
    \includegraphics[width=1\linewidth]{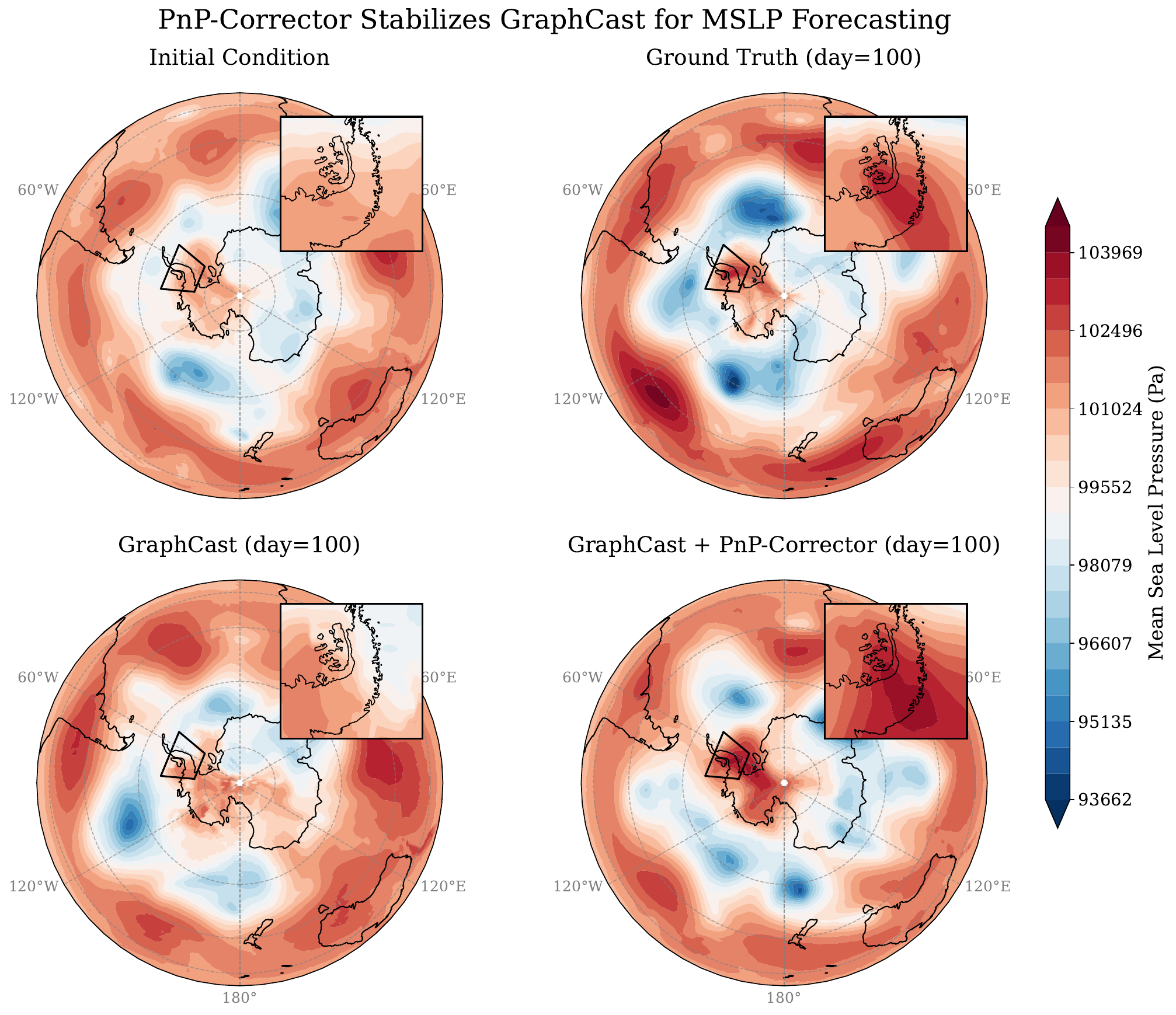}
    \vspace{-15pt}
    \caption{
    \textbf{Qualitative comparison of a $100$-day MSLP forecast} over the Southern Hemisphere. 
    The standard GraphCast model suffers from significant smoothing, failing to preserve the structure of the low-pressure system highlighted in the insets. 
    Our \method{} counteracts this degradation, maintaining a prediction that aligns closely with the ground truth.}
    \label{fig:2x2_comparison_graphcast_mslp_peninsula_highlighted}
    \vspace{-20pt}
\end{figure}

\begin{table}[h]
    \caption{Comparison on global ocean-atmosphere-land coupled forecasting task. We report the average normalized RMSE and MAE over all 166 variables for 240-day and 300-day forecasting. A small RMSE ($\downarrow$) and MAE ($\downarrow$) indicate better performance. Relative improvements are shown with percentage. The best results are in \textbf{bold}, and the second best are with \underline{underline}.}
    \label{tab_case_study}
    \centering
    \begin{small}
        \begin{sc}
            \renewcommand{\multirowsetup}{\centering}
            \setlength{\tabcolsep}{1.3pt}
            \begin{tabular}{l|cc|cc}
                \toprule
                \multirow{2}{*}{Models} 
                & \multicolumn{2}{c|}{240-day} 
                & \multicolumn{2}{c}{300-day} \\
                & RMSE & MAE & RMSE & MAE \\
                \midrule
                SimVP                            
                & 1.3886 
                & 1.1282 
                & 1.3633 
                & 1.1171 \\
                \multirow{2}{*}{SimVP + PnP}     
                & 1.1304 
                & 0.7979 
                & 1.0880 
                & 0.7772 \\
                & +18.60\% 
                & +29.27\% 
                & +20.20\% 
                & +30.43\% \\
                \midrule
                GraphCast                        
                & 1.6038 
                & 1.2039 
                & 1.6627 
                & 1.2559 \\
                \multirow{2}{*}{GraphCast + PnP} 
                & 1.2800 
                & 0.9581 
                & 1.2606 
                & 0.9422 \\
                & +20.19\% 
                & +20.42\% 
                & +24.18\% 
                & +24.98\% \\
                \midrule
                DSLCast                          
                & \underline{0.9623} 
                & \underline{0.6867} 
                & \underline{1.0003} 
                & \underline{0.7157} \\
                \multirow{2}{*}{DSLCast + PnP}   
                & \textbf{0.9313} 
                & \textbf{0.6693} 
                & \textbf{0.9714} 
                & \textbf{0.7005} \\
                & +3.22\% 
                & +2.53\% 
                & +2.89\% 
                & +2.12\% \\
                \bottomrule
            \end{tabular}
            \vspace{-15pt}
        \end{sc}
    \end{small}
\end{table}
The spectral decay revealed in our analysis has direct, detrimental qualitative consequences. 
Figure~\ref{fig:2x2_comparison_graphcast_mslp_peninsula_highlighted} provides a clear visual example for a 100-day Mean Sea Level Pressure (MSLP) forecast using GraphCast. 
The standard GraphCast model exhibits significant forecast degradation, where key features like the low-pressure system over the Antarctic Peninsula become overly smooth and lose their intensity a direct visual manifestation of the spectral energy deficit.
In contrast, the forecast enhanced by our \method{} successfully maintains the sharpness, intensity, and correct geographical position of this critical weather system, closely mirroring the ground truth.

\subsection{Expanding Coupling Framework to More Spheres}
To validate the versatility of \method{}, we conduct a case study on expanding this framework to more spheres. Specifically, we add the land engine, which simulates soil temperature at 4 depth levels, and conduct coupled ocean-land-atmosphere forecasts. As shown in Table~\ref{tab_case_study}, we report the forecasting for 166 atmosphere, ocean, and land variables using the average results of 20 ICs, our \method{} framework consistently enhances the forecast stability. It is worth noting that we also only train the land engine and ocean-atmosphere-land correction agent for a few epochs, the coupled performance is satisfactory. Training more epochs, considering the more complex interaction of the land engine may improve the stability of the whole system. More results refer to the Appendix.

\subsection{Experiment Setting and Efficiency Analysis}
We conduct all experiments on 64 NVIDIA A100 GPUs. The training time of \model{} using \method{} framework is about 22 hours. All baselines and our \model{} are trained in the same settings using \method{} framework. Specifically, the ocean engine, atmosphere engine, and correction agent for ocean-atmosphere coupling are trained using 64 A100 GPUs in parallel. The ocean, atmosphere, and correction engines are trained with an initial learning rate of 0.001 under a CosineAnnealingLR schedule, in which the learning rate is gradually decreased to 0 for 200, 500, and 100 epochs, respectively. For evaluation, we use the checkpoints of the ocean engine and correction agent that achieve the lowest validation loss, together with the final checkpoint of the atmosphere engine. For the case study of expanding this framework to more spheres, the land engine and new correction agent are trained with an initial learning rate of 0.001 under a CosineAnnealingLR schedule, in which the learning rate is gradually decreased to 0 for 200 and 100 epochs, respectively. The land engine and correction for ocean-atmosphere-land coupling are trained using 32 GPUs in parallel. For evaluation, we use the checkpoints of the ocean engine and correction agent that achieve the lowest validation loss, together with the final checkpoint of the atmosphere and land engine. More details can be found in the Appendix. As shown in Table~\ref{tab:params}, our \model{} has competitive efficiency in terms of Params and MACs.

\subsection{Ablation Studies}
    To verify the effectiveness of the proposed method, as shown in Table \ref{tab:ablation}, we conduct detailed ablation experiments using ERA5 and GLORYS12 dataset. We introduce the following model variants: (1) \textbf {\model{} w/o AGB}, we remove Axially-Gated Block in the encoder of \model{}. (2) \textbf {\model{} w/o DSL}, we remove Differentiable Semi-Lagrangian Advection Block in the encoder of \model{}. (3) \textbf {\model{}}, The intact \model{}. (4) \textbf {\method{}{} w/o Correction Agent}, we remove Correction Agent in \method{}. (5) \textbf {\method{}}, the intact \method{}. For (1), (2), and (3), we report the performance of ocean model on 93 ocean variables. For (4) and (5), we report the performance on 162 ocean and atmosphere variables. All results are based on 200-day lead time for 30 ICs of normalized RMSE and MAE. In summary, these ablation studies demonstrate that the strong performance of \method{} is not incidental, but arises from the synergistic interaction among its carefully designed components.

\begin{table}[t]
    \caption{The Params and MACs comparison.}
    \centering
    \begin{small}
        \begin{sc}
            \renewcommand{\multirowsetup}{\centering}
            \setlength{\tabcolsep}{14pt} 
           \begin{tabular}{l|cc}
                \toprule
                Model     & Params (M) & MACs (G) \\ \midrule            
                ConvLSTM  & 37.98      & 2461.09  \\
                SimVP     & 38.56      & 881.85   \\
                OLA       & 35.22      & 142.60   \\
                GraphCast & 36.65      & 600.67   \\
                CirT      & 61.56      & 11.07    \\ \midrule
                Ours      & 34.76      & 572.84   \\ \bottomrule
                \end{tabular}
        \end{sc}
	\end{small}
    \label{tab:params}
    \vspace{-5pt}
    \end{table}  

\begin{table}[t]
    \caption{Ablation studies results, the best results are in \textbf{bold}.}
    \vspace{-5pt}
    \small
    \label{tab:ablation}
    \centering
    \begin{small}
        \begin{sc}
            \renewcommand{\multirowsetup}{\centering}
            \setlength{\tabcolsep}{5pt} 
            \begin{tabular}{l|cc}
                \toprule
                Variants                             & RMSE & MAE \\ 
                \midrule
                \model{} w/o AGB &1.0482      &0.7724     \\
                \model{} w/o DSL   &1.0767      &0.7965\\ 
                \model{}   &\textbf{0.9994}      &\textbf{0.7183}\\
                \midrule
                \model{} w/o Correction Agent                &0.9444      &0.6726     \\
                \method{}                         &\textbf{0.9235}      &\textbf{0.6632}     \\
                \bottomrule
            \end{tabular}
        \end{sc}
    \end{small}
    \vspace{-20pt}
\end{table}
\section{Conclusions}
In this work, we identify and address the critical challenge of Reciprocal Error Amplification in coupled spatiotemporal forecasting by introducing the \method{}, a universal, plug-and-play correction framework. Our approach uniquely decouples the physical simulation from error correction by freezing pre-trained physics engines and exclusively training a lightweight agent to counteract systematic biases. Extensive experiments demonstrate that our framework significantly enhances the long-term stability and accuracy of diverse state-of-the-art models, proving its effectiveness and model-agnostic nature. By mitigating the vicious cycle of error amplification, the \method{} offers a new and robust paradigm for building reliable long-range forecasting systems for complex, interacting dynamical phenomena. While our current approach provides deterministic corrections, a key limitation is the lack of uncertainty quantification. Therefore, a goal for future work is to extend this framework to produce probabilistic forecasts, which is crucial for comprehensive risk assessment.

\clearpage

\section*{Acknowledgements}
This work was supported by the National Natural Science Foundation of China (42125503, 42430602).

\section*{Impact Statement}
This paper presents work whose goal is to advance the field of Machine Learning. There are many potential societal consequences of our work, none which we feel must be specifically highlighted here. In addition, we propose \method, universal correction framework for coupled spatiotemporal forecasting. In the future, we will further extent this framework to probabilistic forecasting, which is crucial for comprehensive risk assessment.

\bibliography{main}
\bibliographystyle{icml2026}

\newpage
\appendix
\onecolumn

\section{Theoretical Analysis}
\label{app:Theoretical}
To theoretically justify the stability improvements observed in Figure \ref{fig:2x2_comparison}, we model the coupled forecasting process as a discrete dynamical system and analyze the error propagation bounds.

\begin{assumption}[Lipschitz Continuity]
Let $\mathcal{F}$ denote the pre-trained coupled physics simulator (representing the joint operator of $F^A$ and $F^B$) and $\mathcal{C}_\phi$ denote the correction agent.
We assume $\mathcal{F}$ is $L_{\mathcal{F}}$-Lipschitz continuous. Given the chaotic nature of coupled systems and the phenomenon of Reciprocal Error Amplification (REA), we posit $L_{\mathcal{F}} > 1$.
We assume $\mathcal{C}_\phi$ is $L_{\mathcal{C}}$-Lipschitz continuous.
\end{assumption}

\begin{assumption}[Bounded Local Errors]
Let $\epsilon_{sim} = \sup_t \|\mathcal{F}(x_t) - x_{t+1}\|$ be the single-step physical simulation error, and $\epsilon_{corr} = \sup_t \|\mathcal{C}_\phi(\mathcal{F}(x_t)) - x_{t+1}\|$ be the residual error of the correction agent trained via Eq. (9).
\end{assumption}

\begin{lemma}[Error Bound Comparison]
For a $T$-step autoregressive rollout:
\begin{enumerate}
    \item The cumulative error of the \textbf{uncorrected baseline} grows exponentially with time: $O((L_{\mathcal{F}})^T)$, leading to rapid forecast collapse (REA).
    \item The cumulative error of the \textbf{PnP-Corrector} framework is governed by the effective expansion rate $\lambda = L_{\mathcal{C}} \cdot L_{\mathcal{F}}$. If the correction agent acts as a contraction mapping such that $\lambda < 1$, the error is uniformly bounded by $\frac{\epsilon_{corr}}{1-\lambda}$.
\end{enumerate}
\end{lemma}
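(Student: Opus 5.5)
We set up two error recursions, one for each rollout, with the ground-truth trajectory $(x_t)$ as reference. For the uncorrected baseline let $\hat x_{t+1}=\mathcal{F}(\hat x_t)$ and $e_t=\|\hat x_t-x_t\|$. For the PnP-Corrector let $\tilde x_{t+1}=\mathcal{C}_\phi(\mathcal{F}(\tilde x_t))$ and $\tilde e_t=\|\tilde x_t-x_t\|$. In both cases the initial error is $e_0=\tilde e_0=0$, since both rollouts start from the true initial condition.

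\textbf{Baseline (part 1).} Insert $\mathcal{F}(x_t)$ and apply the triangle inequality:
\[
e_{t+1}\le \|\mathcal{F}(\hat x_t)-\mathcal{F}(x_t)\|+\|\mathcal{F}(x_t)-x_{t+1}\|\le L_{\mathcal{F}}e_t+\epsilon_{sim},
\]
using the Lipschitz assumption and the definition of $\epsilon_{sim}$. Unrolling gives
\[
e_T\le \epsilon_{sim}\sum_{k=0}^{T-1}L_{\mathcal{F}}^k=\epsilon_{sim}\frac{L_{\mathcal{F}}^T-1}{L_{\mathcal{F}}-1}.
\]
Since $L_{\mathcal{F}}>1$, this is $O(L_{\mathcal{F}}^T)$.

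This is only an upper bound. The phrase ``grows exponentially'' should therefore be read as ``admits no better worst-case guarantee than exponential.'' To show the bound is attained I would take a linear example: $\mathcal{F}(x)=L_{\mathcal{F}}x+b$ with a constant one-step bias aligned with the expanding direction. There the inequalities are equalities, so the $O(L_{\mathcal{F}}^T)$ rate is sharp. I would state the result in this worst-case sense rather than claim a lower bound for every simulator.

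\textbf{Corrector (part 2).} Insert $\mathcal{C}_\phi(\mathcal{F}(x_t))$:
\[
\tilde e_{t+1}\le \|\mathcal{C}_\phi(\mathcal{F}(\tilde x_t))-\mathcal{C}_\phi(\mathcal{F}(x_t))\|+\|\mathcal{C}_\phi(\mathcal{F}(x_t))-x_{t+1}\|\le L_{\mathcal{C}}L_{\mathcal{F}}\,\tilde e_t+\epsilon_{corr}.
\]
The first term uses that the composition $\mathcal{C}_\phi\circ\mathcal{F}$ is $\lambda$-Lipschitz with $\lambda=L_{\mathcal{C}}L_{\mathcal{F}}$; the second uses the definition of $\epsilon_{corr}$. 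Unrolling,
\[
\tilde e_T\le \epsilon_{corr}\frac{1-\lambda^T}{1-\lambda}\le \frac{\epsilon_{corr}}{1-\lambda}
\]
for every $T$ when $\lambda<1$, which is the claimed uniform bound. If $\tilde e_0\neq0$, an extra transient $\lambda^T\tilde e_0$ appears and decays geometrically. Equivalently, by Banach's fixed-point argument the error sequence is attracted to a ball of radius $\epsilon_{corr}/(1-\lambda)$.

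\textbf{Main obstacle.} The algebra above is routine; the difficulty is in the hypotheses. With $L_{\mathcal{F}}>1$, the condition $\lambda<1$ forces $L_{\mathcal{C}}<1/L_{\mathcal{F}}$, so $\mathcal{C}_\phi$ must be a strong contraction. It must also keep $\epsilon_{corr}$ small, which means it can only contract in directions transverse to the true trajectory. A global Lipschitz constant is too coarse to express this.

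I would first try to keep the global statement as written, treating $\lambda<1$ as an explicit hypothesis, as the lemma does. I would then note a local refinement: replace $L_{\mathcal{C}}$ and $L_{\mathcal{F}}$ by Lipschitz constants on a neighborhood of the data manifold that contains the whole rollout. The bound $\tilde e_t\le\epsilon_{corr}/(1-\lambda)$ itself keeps the rollout inside that neighborhood by induction, provided the neighborhood radius exceeds $\epsilon_{corr}/(1-\lambda)$. This makes the contraction assumption plausible without contradicting chaotic expansion of $\mathcal{F}$ along the attractor.
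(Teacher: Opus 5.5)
Your proof is correct and follows essentially the same route as the paper: you insert $\mathcal{F}(x_t)$ and $\mathcal{C}_\phi(\mathcal{F}(x_t))$ via the triangle inequality, obtain the same recursions $e_{t+1}\le L_{\mathcal{F}}e_t+\epsilon_{sim}$ and $\tilde e_{t+1}\le\lambda\tilde e_t+\epsilon_{corr}$, and unroll them as geometric series. Your finite-$T$ bound $\tilde e_T\le\epsilon_{corr}(1-\lambda^T)/(1-\lambda)$ gives the uniform-in-$T$ claim more directly than the paper's limiting statement, and you are right to note that part 1 establishes only a worst-case upper-bound rate.
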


\begin{proof}
Let $x_t$ be the ground truth state at time $t$.

\textbf{Case 1: Uncorrected Baseline.}
The prediction follows $\hat{x}_{t+1} = \mathcal{F}(\hat{x}_t)$. Let $e_t = \|\hat{x}_t - x_t\|$ be the cumulative error. Using the triangle inequality and Lipschitz continuity:
\begin{equation}
\begin{aligned}
    e_{t+1} &= \|\mathcal{F}(\hat{x}_t) - x_{t+1}\| \\
    &\le \|\mathcal{F}(\hat{x}_t) - \mathcal{F}(x_t)\| + \|\mathcal{F}(x_t) - x_{t+1}\| \\
    &\le L_{\mathcal{F}} e_t + \epsilon_{sim}
\end{aligned}
\end{equation}
Solving this recurrence with $e_0=0$ and $L_{\mathcal{F}} > 1$ yields:
\begin{equation}
    e_T \le \epsilon_{sim} \sum_{k=0}^{T-1} L_{\mathcal{F}}^k = \epsilon_{sim} \frac{L_{\mathcal{F}}^T - 1}{L_{\mathcal{F}} - 1}
\end{equation}
The term $L_{\mathcal{F}}^T$ confirms the exponential error explosion characteristic of REA.

\textbf{Case 2: PnP-Corrector.}
The prediction follows $\tilde{x}_{t+1} = \mathcal{C}_\phi(\mathcal{F}(\tilde{x}_t))$. Let $\tilde{e}_t = \|\tilde{x}_t - x_t\|$.
\begin{equation}
\begin{aligned}
    \tilde{e}_{t+1} &= \|\mathcal{C}_\phi(\mathcal{F}(\tilde{x}_t)) - x_{t+1}\| \\
    &\le \|\mathcal{C}_\phi(\mathcal{F}(\tilde{x}_t)) - \mathcal{C}_\phi(\mathcal{F}(x_t))\| + \|\mathcal{C}_\phi(\mathcal{F}(x_t)) - x_{t+1}\|
\end{aligned}
\end{equation}
The first term represents the propagation of previous errors through the composite function $\mathcal{C}_\phi \circ \mathcal{F}$, bounded by the product of Lipschitz constants $L_{\mathcal{C}} L_{\mathcal{F}}$. The second term is exactly the training objective of the correction agent (Eq. 9), bounded by $\epsilon_{corr}$.
\begin{equation}
    \tilde{e}_{t+1} \le (L_{\mathcal{C}} L_{\mathcal{F}}) \tilde{e}_t + \epsilon_{corr} = \lambda \tilde{e}_t + \epsilon_{corr}
\end{equation}
By training the correction agent, we effectively optimize $\mathcal{C}_\phi$ to counteract the expansion of $\mathcal{F}$, ensuring $\lambda < 1$. Under this condition, the error converges:
\begin{equation}
    \lim_{T \to \infty} \tilde{e}_T \le \frac{\epsilon_{corr}}{1 - \lambda}
\end{equation}
This proves that PnP-Corrector transforms the unstable, divergent dynamics into a bounded, stable system.
\end{proof}

\clearpage

\section{Dataset Details}
\subsection{Dataset}

In our coupled spatiotemporal forecasting experiment, the atmosphere variables are sourced from  ERA5~\cite{hersbach2020era5} dataset and the ocean variables are sourced from the GLORYS12 dataset. ERA5 offers global atmosphere state, and the selected subset contains 5 variables (Z, Q, T, U, V) with 13 pressure levels (50 hPa, 100 hPa, 150 hPa, 200 hPa, 250 hPa, 300 hPa, 400 hPa, 500 hPa, 600 hPa, 700 hPa, 850 hPa, 925 hPa and 1,000 hPa) and 4 variables (U10M, V10M, T2M, MSLP) with surface level, which can be downloaded from \url{https://cds.climate.copernicus.eu}. GLORYS12 offers daily mean data covering latitudes between -80° and 90° from 1993 to the present, and the subset we use includes 4 depth level ocean variables (each with 23 depth levels, corresponding to 0.49 m, 2.65 m, 5.08 m, 7.93 m, 11.41 m, 15.81 m, 21.60 m, 29.44 m, 40.34 m, 55.76 m, 77.85 m, 92.32 m, 109.73 m, 130.67 m, 155.85 m, 186.13 m, 222.48 m, 266.04 m, 318.13 m, 380.21 m, 453.94 m, 541.09 m and 643.57 m), Sea salinity (S), Sea stream zonal velocity ($\mathrm{U}_{\mathrm{o}}$), Sea stream meridional velocity ($\mathrm{V}_{\mathrm{o}}$), Sea temperature ($\mathrm{T}_{\mathrm{o}}$), and 1 surface level variable Sea surface height (SSH), which can be downloaded from \url{https://data.marine.copernicus.eu}. For the case study in validating the extensibility of the framework, land data includes soil temperature ($\mathrm{T}_{\mathrm{land}}$) across 4 level is sourced from ERA5. For data partitioning, we use years from 1993 to 2020, which are 1993-2017 for training, 2018-2019 for validating, and 2020 for testing. To improve computational efficiency, we use bilinear interpolation to downsample them to 1 degree (H=181, W=360) spatial resolution. And to better adapt to the input of different architecture models, we use the data with a size of 180 × 360. The temporal resolution is 24 hours, which corresponds to 12:00 UTC for atmosphere variables, land variables, and the daily mean state for ocean variables. All the data we used are shown in Table~\ref{tab:appendix_data} and Table~\ref{tab:appendix_data_case_study}.

\begin{table*}[ht]
\centering
\setlength{\tabcolsep}{6pt}
\caption{The data details in this work.}
\label{tab:appendix_data}
\begin{tabular}{ccccccc}
\toprule
\textbf{Type} & \textbf{Full name}             & \textbf{Abbreviation} & \textbf{Layers} & \textbf{Time} & \textbf{Dt} & \textbf{Spatial Resolution} \\ \midrule
Atmosphere    & Geopotential                   & Z                     & 13              & 1993-2020     & 24h         & 1°                        \\
Atmosphere    & Specific humidity              & Q                     & 13              & 1993-2020     & 24h         & 1°                        \\
Atmosphere    & Temperature                    & T                     & 13              & 1993-2020     & 24h         & 1°                        \\
Atmosphere    & Zonal wind component           & U                     & 13              & 1993-2020     & 24h         & 1°                        \\
Atmosphere    & Meridional wind component      & V                     & 13              & 1993-2020     & 24h         & 1°                        \\
Atmosphere    & 10 metre u wind component      & U10M                  & 1               & 1993-2020     & 24h         & 1°                        \\
Atmosphere    & 10 metre v wind component      & V10M                  & 1               & 1993-2020     & 24h         & 1°                        \\
Atmosphere    & 2 metre temperature            & T2M                   & 1               & 1993-2020     & 24h         & 1°                        \\
Atmosphere    & Mean sea level pressure        & MSLP                  & 1               & 1993-2020     & 24h         & 1°                        \\ \hline
Ocean         & Sea salinity                   & S                     & 23              & 1993-2020     & 24h         & 1°                        \\
Ocean         & Sea stream zonal velocity      & $\mathrm{U}_{\mathrm{o}}$                  & 23              & 1993-2020     & 24h         & 1°                        \\
Ocean         & Sea stream meridional velocity & $\mathrm{V}_{\mathrm{o}}$                  & 23              & 1993-2020     & 24h         & 1°                        \\
Ocean         & Sea temperature                & $\mathrm{T}_{\mathrm{o}}$                  & 23              & 1993-2020     & 24h         & 1°                        \\
Ocean         & Sea surface height             & SSH                   & 1               & 1993-2020     & 24h         & 1°                         \\ \bottomrule
\end{tabular}
\vspace{-9pt}
\end{table*}

\begin{table*}[ht]
\centering
\setlength{\tabcolsep}{12pt}
\caption{The data details of case study experiment in this work.}
\label{tab:appendix_data_case_study}
\begin{tabular}{ccccccc}
\toprule
\textbf{Type} & \textbf{Full name}             & \textbf{Abbreviation} & \textbf{Layers} & \textbf{Time} & \textbf{Dt} & \textbf{Spatial Resolution} \\ \midrule
Land    & Soil Temperature                   & $\mathrm{T}_{\mathrm{land}}$                     & 4              & 1993-2020     & 24h         & 1°               
 \\ \bottomrule
\end{tabular}
\vspace{-9pt}
\end{table*}

\subsection{Data Preprocessing}
Different atmosphere and ocean variables exhibit substantial variations in magnitude. To enable the model to concentrate on accurate simulation rather than learning the inherent magnitude discrepancies among variables, we normalize the input data prior to model ingestion. Specifically, for atmosphere variables, we calculate these statistics from an extended dataset covering the period 1993 to 2017. For ocean variables, we first compute the climatological mean of all periodic variables using data from 1993–2017 (the first 365 days of each year in the training set). The shape of the climatological mean is (365, 47, 181, 360). Specifically, 365 denotes the days, 47 denotes the number of periodic variables (ie., 23 layers Sea salinity, 23 layers Sea temperature, and Sea surface height). 181 denotes the height and 360 denotes the width. Based on this mean, Sea salinity, Sea temperature, and Sea surface height are converted into Sea salinity anomaly, Sea temperature anomaly, and Sea surface height anomaly, respectively. We then compute the mean and standard deviation of all variables (using the anomaly fields for periodic variables) over the same 1993–2017 training period, and use these statistics for subsequent normalization. For land variables used in the case study of expanding the framework to more spheres, we also calculate these statistics from an extended dataset covering the period 1993 to 2017. Each variable thus possesses a dedicated mean and standard deviation. Before inputting data into the model, we normalize the data by subtracting the corresponding mean and dividing the respective standard deviation. For the `nan' values of land, we fill them with zero before inputting the data into the model.

\section{Experiments Details}
\subsection{Evaluation Metrics}
We utilize five metrics, RMSE (Root Mean Square Error), MAE (Mean Absolute Error), ACC (Anomaly Correlation Coefficient), CSI (Critical Success Index), and SEDI (Symmetric Extremal Dependence Index) to evaluate the forecasting performance, which can be defined as:
\begin{equation}
    \small
    \operatorname{RMSE}(\mathcal{J}, t) = \sqrt{\frac{\sum\limits_{i=1}^{N_{\text{lat}}} \sum\limits_{j=1}^{N_{\text{lon}}} L(i) \left( \hat{\mathbf{A}}_{ij,t}^\mathcal{J} - \mathbf{A}_{ij,t}^\mathcal{J} \right)^2}{N_{\text{lat}} \times N_{\text{lon}}}}
\end{equation}
\begin{equation}
    \small
    \operatorname{MAE}(\mathcal{J}, t) =
    \frac{\sum\limits_{i=1}^{N_{\text{lat}}} \sum\limits_{j=1}^{N_{\text{lon}}}
    L(i)\, \left| \hat{\mathbf{A}}_{ij,t}^\mathcal{J} - \mathbf{A}_{ij,t}^\mathcal{J} \right|}
    {N_{\text{lat}} \times N_{\text{lon}}}
\end{equation}
\begin{equation}
    \small
    \operatorname{ACC}(\mathcal{J}, t) = \frac{\sum\limits_{i=1}^{N_{\text{lat}}} \sum\limits_{j=1}^{N_{\text{lon}}} L(i) \hat{\mathbf{A'}}_{ij,t}^\mathcal{J} \mathbf{A'}_{ij,t}^\mathcal{J}}{\sqrt{\sum\limits_{i=1}^{N_{\text{lat}}} \sum\limits_{j=1}^{N_{\text{lon}}} L(i) \left( \hat{\mathbf{A'}}_{ij,t}^\mathcal{J} \right)^2 \times \sum\limits_{i=1}^{N_{\text{lat}}} \sum\limits_{j=1}^{N_{\text{lon}}} L(i) \left( \mathbf{A'}_{ij,t}^\mathcal{J} \right)^2}}
\end{equation}
 where, latitude-dependent weights are defined as $L(i)=N_{\text {lat }} \times \frac{\cos \phi_i}{\sum_{i’=1}^{N_{\text {lat}}} \cos \phi_{i’}}$. $\phi_i$ is the latitude at index i. The anomaly of $A$, denoted as $A'$, is computed as the deviation from its climatology, which corresponds to the long-term mean of the meteorological state estimated from multiple years of training data. RMSE, MAE, and ACC are averaged across all time steps and over non-NaN spatial grid points, providing summary statistics for each variable $\mathcal{J}$ at a given lead time $\Delta t$.

\begin{equation}
\small
\operatorname{CSI}(\mathcal{J}, t)=\frac{\mathrm{TP}}{\mathrm{TP}+\mathrm{FP}+\mathrm{FN}}
\end{equation}
\begin{equation}
\small
\operatorname{SEDI}(\mathcal{J}, t)=\frac{\log (F)-\log (H)-\log (1-F)+\log (1-H)}{\log (F)+\log (H)+\log (1-F)+\log (1-H)}
\end{equation}
where, true positives (TP) represent the number of instances in which the ocean state is correctly simulated, while false positives (FP) and false negatives (FN) are defined analogously. $F=\frac{\mathrm{FP}}{\mathrm{FP}+\mathrm{TP}}$ is the false alarm rate, and $H=\frac{\mathrm{TP}}{\mathrm{TP}+\mathrm{FN}}$ represents the hit rate.

\subsection{Model Training}
For ocean-atmosphere coupling, we train baseline models and our \model{} using the same settings with the PnP framework. The ocean engine, atmosphere engine, and correction agent are trained with an initial learning rate of 0.001 under a CosineAnnealingLR schedule, in which the learning rate is gradually decreased to 0 for 200, 500, and 100 epochs, respectively. The atmosphere engine receives 69 atmosphere variables at time $t$, together with 1 ocean boundary condition (Sea surface temperature, SST) at time $t$ and outputs 69 atmosphere variables at time $t+1$. The ocean engine receives 93 ocean variables (for periodic variables, we first subtract its climatological mean) at time $t$, together with 8 atmosphere boundary conditions (4 surface variables at time $t$, which include U10M, V10M, T2M, MSLP, and 4 surface variables at time $t+1$), and outputs 93 ocean variables at time $t+1$. The correction agent receives 162 predicted variables (69 atmosphere variables and 93 ocean variables) at time $t+1$, and outputs 162 corrected variables at time $t+1$ as the final forecast results.

In the case study of expanding the framework to more spheres, the land engine and new correction agent are trained with an initial learning rate of 0.001 under a CosineAnnealingLR schedule, in which the learning rate is gradually decreased to 0 for 200 and 100 epochs, respectively. More training epochs of correction agent may improve the performance, however, we just intend to demonstrate the proposed framework can be expanded to more spheres. So we only train few epochs of correction agent. The land engine receives 4 land variables at time $t$, together with 3 atmosphere boundary condition (U10M, V10M, and T2M) at time $t$ and outputs 4 land variables at time $t+1$. And the new correction agent receives 166 predicted variables (69 atmosphere variables, 93 ocean variables, and 4 land variables) at time $t+1$, and outputs 166 corrected variables at time $t+1$ as the final forecast results.

\subsection{Model Inference}
For inference of ocean-atmosphere coupling, we use the checkpoints of the ocean engine and correction agent that achieve the lowest validation loss, together with the final checkpoint of the atmosphere engine. The atmosphere engine first receives 69 atmosphere variables at time $t$, together with 1 ocean boundary condition (Sea surface temperature, SST) at time $t$ and outputs 69 atmosphere variables at time $t+1$. Then the 4 predicted surface variables (U10M, V10M, T2M, MSLP) at time $t+1$, together with 4 surface variables (U10M, V10M, T2M, MSLP) at time $t$ are used as the boundary conditions of the ocean engine. The ocean engine receives 93 ocean variables at time $t$, together with 8 atmosphere boundary conditions, and outputs 93 ocean variables at time $t+1$. Finally, the correction agent receives 162 predicted variables (69 atmosphere variables and 93 ocean variables) at time $t+1$, and outputs 162 corrected variables at time $t+1$ as the final forecasted results. It is worth noting that all of the atmosphere engine, ocean engine, and correction agent receive the normalized variables. So, in the long-term rollout forecasting, since the forecasted ocean state contains normalized Sea surface temperature anomalies (SSTa), we first denormalize it to unnormalized SSTa. Then, we add the corresponding climatological mean and SSTa to get unnormalized SST. Subsequently, we normalize it to normalized SST to act as the boundary condition of the atmosphere engine for the rollout forecasting. By repeating the above process, the PnP framework can generate forecasts of arbitrary length using the autoregressive approach.

In the case study of expanding the framework to more spheres, for inference, we use the checkpoints of the ocean engine and correction agent that achieve the lowest validation loss, together with the final checkpoint of the atmosphere and land engine. The atmosphere engine first receives 69 atmosphere variables at time $t$, together with 1 ocean boundary condition (Sea surface temperature, SST) at time $t$ and outputs 69 atmosphere variables at time $t+1$. Then the 4 predicted surface variables (U10M, V10M, T2M, MSLP) at time $t+1$, together with 4 surface variables (U10M, V10M, T2M, MSLP) at time $t$ are used as the boundary conditions of the ocean engine. The ocean engine receives 93 ocean variables at time $t$, together with 8 atmosphere boundary conditions, and outputs 93 ocean variables at time $t+1$. The land engine first receives 4 land variables at time $t$, together with 3 atmosphere boundary condition (U10M, V10M, T2M) at time $t$ and outputs 4 land variables at time $t+1$. Finally, the correction agent receives 166 predicted variables (69 atmosphere variables, 93 ocean variables, and 4 land variables) at time $t+1$, and outputs 166 corrected variables at time $t+1$ as the final forecast results. It is worth noting that all of the atmosphere engine, ocean engine, land engine, and correction agent receive the normalized variables. So, in the long-term rollout forecasting, since the forecasted ocean state contains normalized Sea surface temperature anomalies (SSTa), we first denormalize it to unnormalized SSTa. Then, we add the corresponding climatological mean and SSTa to get unnormalized SST. Subsequently, we normalize it to normalized SST to act as the boundary condition of the atmosphere engine for the rollout forecasting. By repeating the above process, the PnP framework can generate forecasts of arbitrary length using the autoregressive approach.

\section{Additional Results of the Case Study on Expanding to More Spheres}
To validate the extensibility of our framework, we conduct a case study in extending the experimental scenario from ocean-atmosphere coupling to ocean-atmosphere-land coupling. As shown in Table~\ref{tab:mainres_case_study}, and Figure~\ref{fig:visual_results_case_study}, our proposed \method{} still improves the performance of different baselines in this more challenge coupling systems.

\begin{table*}[t]
    \caption{In global ocean-weather-land forecasting task, we compare the performance of our \method{} with 5 baselines. The average results for all 166 variables of normalized RMSE and MAE. A small RMSE ($\downarrow$) and MAE ($\downarrow$) indicate better performance. Relative improvements are shown with percentage. The best results are in \textbf{bold}, and the second best are with \underline{underline}.
    }
    \label{tab:mainres_case_study}
    \centering
    \begin{small}
        \begin{sc}
            \renewcommand{\multirowsetup}{\centering}
            \setlength{\tabcolsep}{5pt}
            \begin{tabular}{l|cccccccc}
                \toprule
                \multirow{3}{*}{Models} & \multicolumn{8}{c}{Metrics} \\
                \cmidrule{2-9}
                & \multicolumn{2}{c|}{120-day} & \multicolumn{2}{c|}{180-day} & \multicolumn{2}{c|}{240-day} & \multicolumn{2}{c}{300-day} \\
                & RMSE & \multicolumn{1}{c|}{MAE} & RMSE & \multicolumn{1}{c|}{MAE} & RMSE & \multicolumn{1}{c|}{MAE} & RMSE & MAE \\
                \midrule
                ConvLSTM                         & 1.7108   & \multicolumn{1}{c|}{1.3954}   & 1.7701            & \multicolumn{1}{c|}{1.4400}            & 1.8069            & \multicolumn{1}{c|}{1.4700}            & 1.7971            & 1.4662            \\
                \multirow{2}{*}{ConvLSTM + PnP}  & 1.0723   & \multicolumn{1}{c|}{0.7924}   & 1.2114            & \multicolumn{1}{c|}{0.9074}            & 1.2361            & \multicolumn{1}{c|}{0.9369}            & 1.2573            & 0.9621            \\
                                                 & +37.32\% & \multicolumn{1}{c|}{+43.21\%} & +31.56\%          & \multicolumn{1}{c|}{+36.99\%}          & +31.59\%          & \multicolumn{1}{c|}{+36.27\%}          & +30.03\%          & +34.38\%          \\
                \midrule
                SimVP                            & 1.3368   & \multicolumn{1}{c|}{1.0824}   & 1.3700            & \multicolumn{1}{c|}{1.1118}            & 1.3886            & \multicolumn{1}{c|}{1.1282}            & 1.3633            & 1.1171            \\
                \multirow{2}{*}{SimVP + PnP}     & 1.0136   & \multicolumn{1}{c|}{0.6815}   & 1.1150            & \multicolumn{1}{c|}{0.7786}            & 1.1304            & \multicolumn{1}{c|}{0.7979}            & 1.0880            & 0.7772            \\
                                                 & +24.18\% & \multicolumn{1}{c|}{+37.04\%} & +18.61\%          & \multicolumn{1}{c|}{+29.97\%}          & +18.60\%          & \multicolumn{1}{c|}{+29.27\%}          & +20.20\%          & +30.43\%          \\
                \midrule
                GraphCast                        & 1.1435   & \multicolumn{1}{c|}{0.8354}   & 1.4383            & \multicolumn{1}{c|}{1.0657}            & 1.6038            & \multicolumn{1}{c|}{1.2039}            & 1.6627            & 1.2559            \\
                \multirow{2}{*}{GraphCast + PnP} & 1.0904   & \multicolumn{1}{c|}{0.8194}   & 1.2257            & \multicolumn{1}{c|}{0.9222}            & 1.2800            & \multicolumn{1}{c|}{0.9581}            & 1.2606            & 0.9422            \\
                                                 & +4.64\%  & \multicolumn{1}{c|}{+1.92\%}  & +14.78\%          & \multicolumn{1}{c|}{+13.46\%}          & +20.19\%          & \multicolumn{1}{c|}{+20.42\%}          & +24.18\%          & +24.98\%          \\
                \midrule
                Ola                              & {\textgreater{}100} & \multicolumn{1}{c|}{{\textgreater{}100}} & {\textgreater{}100} & \multicolumn{1}{c|}{{\textgreater{}100}} & {\textgreater{}100} & \multicolumn{1}{c|}{{\textgreater{}100}} & {\textgreater{}100} & {\textgreater{}100} \\
                \multirow{2}{*}{Ola + PnP}       & 31.9352  & \multicolumn{1}{c|}{27.5780}  & 38.2754           & \multicolumn{1}{c|}{36.0165}           & 38.2840           & \multicolumn{1}{c|}{36.0245}           & 38.3089           & 36.0505           \\
                                                 & +98.67\% & \multicolumn{1}{c|}{+91.14\%} & +99.97\%          & \multicolumn{1}{c|}{+99.78\%}          & +100.00\%         & \multicolumn{1}{c|}{+100.00\%}         & +100.00\%         & +100.00\%         \\
                \midrule
                CirT                             & 2.2144   & \multicolumn{1}{c|}{1.1144}   & 2.2921            & \multicolumn{1}{c|}{1.1602}            & 2.3045            & \multicolumn{1}{c|}{1.1697}            & 2.2562            & 1.1354            \\
                \multirow{2}{*}{CirT + PnP}      & 1.8228   & \multicolumn{1}{c|}{0.9998}   & 1.9134            & \multicolumn{1}{c|}{1.0520}            & 1.9152            & \multicolumn{1}{c|}{1.0528}            & 1.8770            & 1.0231            \\
                                                 & +17.69\% & \multicolumn{1}{c|}{+10.28\%} & +16.52\%          & \multicolumn{1}{c|}{+9.32\%}           & +16.89\%          & \multicolumn{1}{c|}{+10.00\%}          & +16.81\%          & +9.89\%           \\
                \midrule
                DSLCast                          & \underline{0.8809} & \multicolumn{1}{c|}{\underline{0.6271}} & \underline{0.9172} & \multicolumn{1}{c|}{\underline{0.6534}} & \underline{0.9623} & \multicolumn{1}{c|}{\underline{0.6867}} & \underline{1.0003} & \underline{0.7157} \\
                \multirow{2}{*}{DSLCast + PnP}   & \textbf{0.8530}   & \multicolumn{1}{c|}{\textbf{0.6097}}   & \textbf{0.8927}   & \multicolumn{1}{c|}{\textbf{0.6399}}   & \textbf{0.9313}   & \multicolumn{1}{c|}{\textbf{0.6693}}   & \textbf{0.9714}   & \textbf{0.7005}   \\
                                                 & +3.17\%  & \multicolumn{1}{c|}{+2.77\%}  & +2.67\%           & \multicolumn{1}{c|}{+2.07\%}           & +3.22\%           & \multicolumn{1}{c|}{+2.53\%}           & +2.89\%           & +2.12\%           \\
                \bottomrule
            \end{tabular}
        \end{sc}
    \end{small}
    \vspace{-5pt}
\end{table*}

\begin{figure*}[h]
    \centering
    \includegraphics[width=1.01\linewidth]{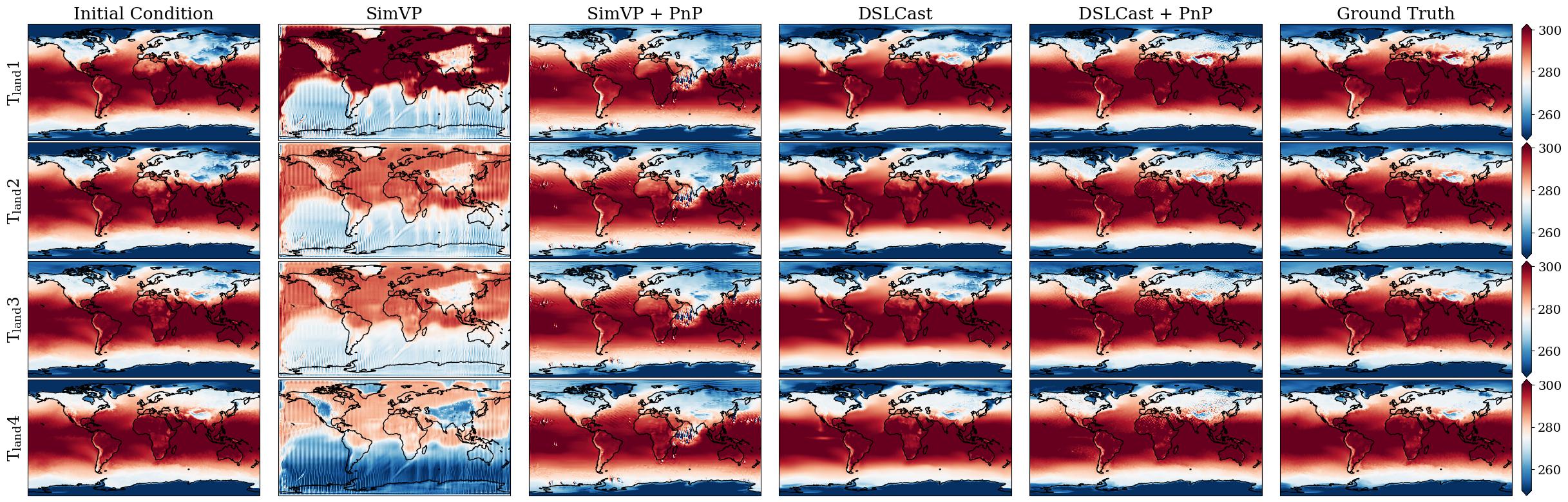}
    \caption{In the case study of expanding the \method{} framework to more spheres, we show 100-day forecast results of different models using our proposed \method{} framework for land variables. Our method still achieves better physical consistency and yields results that are closest to the ground truth.}
    \label{fig:visual_results_case_study}
\end{figure*}

\clearpage

\section{Additional Results}
Additional quantitative results of ocean-atmosphere coupling (RMSE and MAE for important pressure level and depth level) are shown in Figure~\ref{fig_rmse_appendix} and Figure~\ref{fig_mae_appendix}. RMSE and MAE represent the average performance for 60 ICs, starting from Jan. 1, 2020, and the interval of each IC is 1 day. It can be seen that our proposed \method{} framework improves long-term forecasting stability for all baselines. Additional visual results are presented in Figure \ref{fig_120.0-day}, Figure \ref{fig_180.0-day}, Figure \ref{fig_240.0-day}, and Figure \ref{fig_300.0-day}, covering 16 important variables across forecasting time from 120 to 300 days. Collectively, these additional results further demonstrate the efficacy of the proposed \method{} and \model{}.

\begin{figure*}[ht]
\centering
\includegraphics[width=0.98\linewidth]{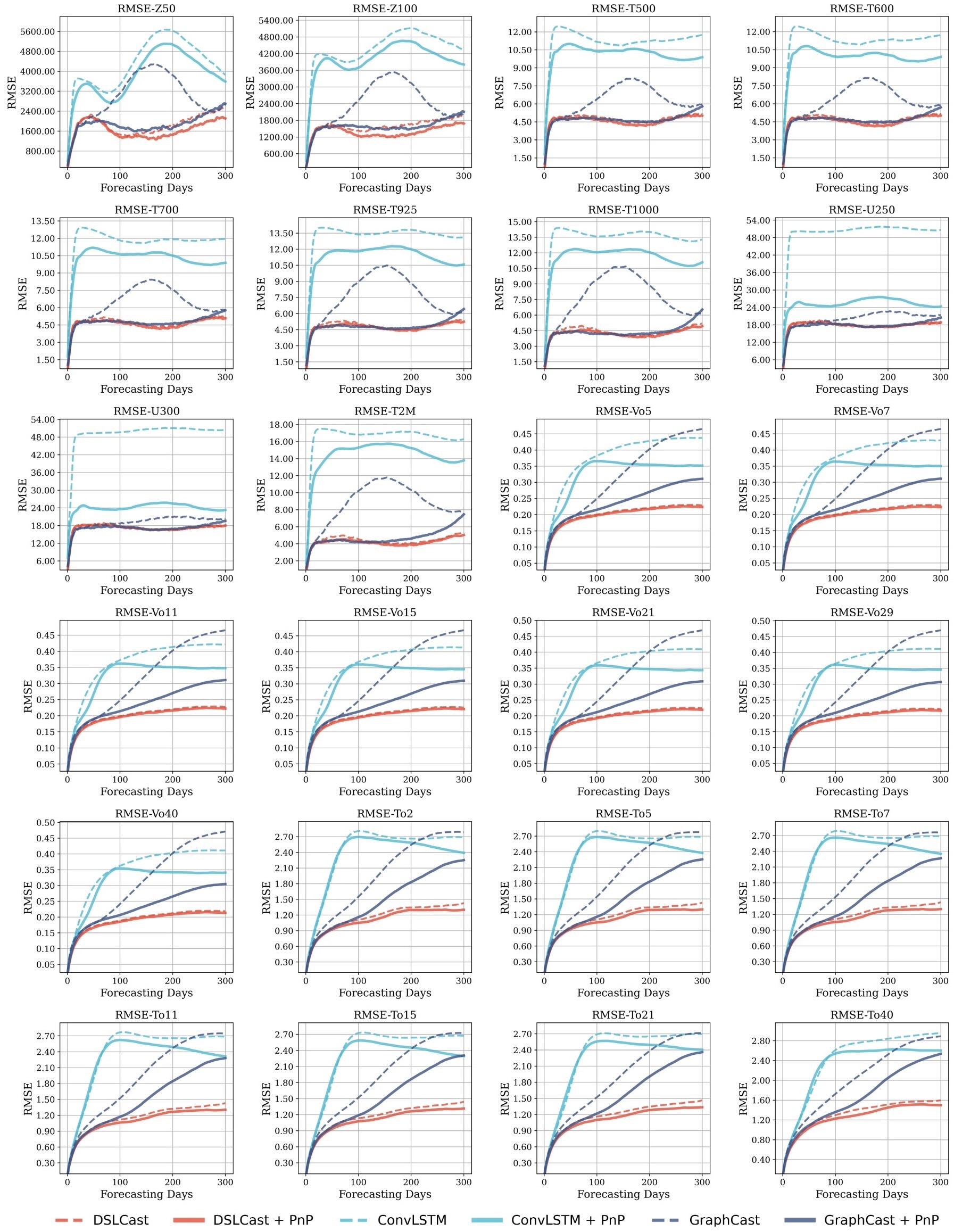}
\vspace{-8pt}
\caption{The latitude-weighted RMSE (lower is better) results of several important atmosphere and ocean variables. The corrected models that use \method{} framework (solid lines) achieve lower RMSE, demonstrating a universal improvement over the uncorrected baselines (dashed lines).}
\label{fig_rmse_appendix}
\end{figure*}

\begin{figure*}[ht]
\centering
\includegraphics[width=0.98\linewidth]{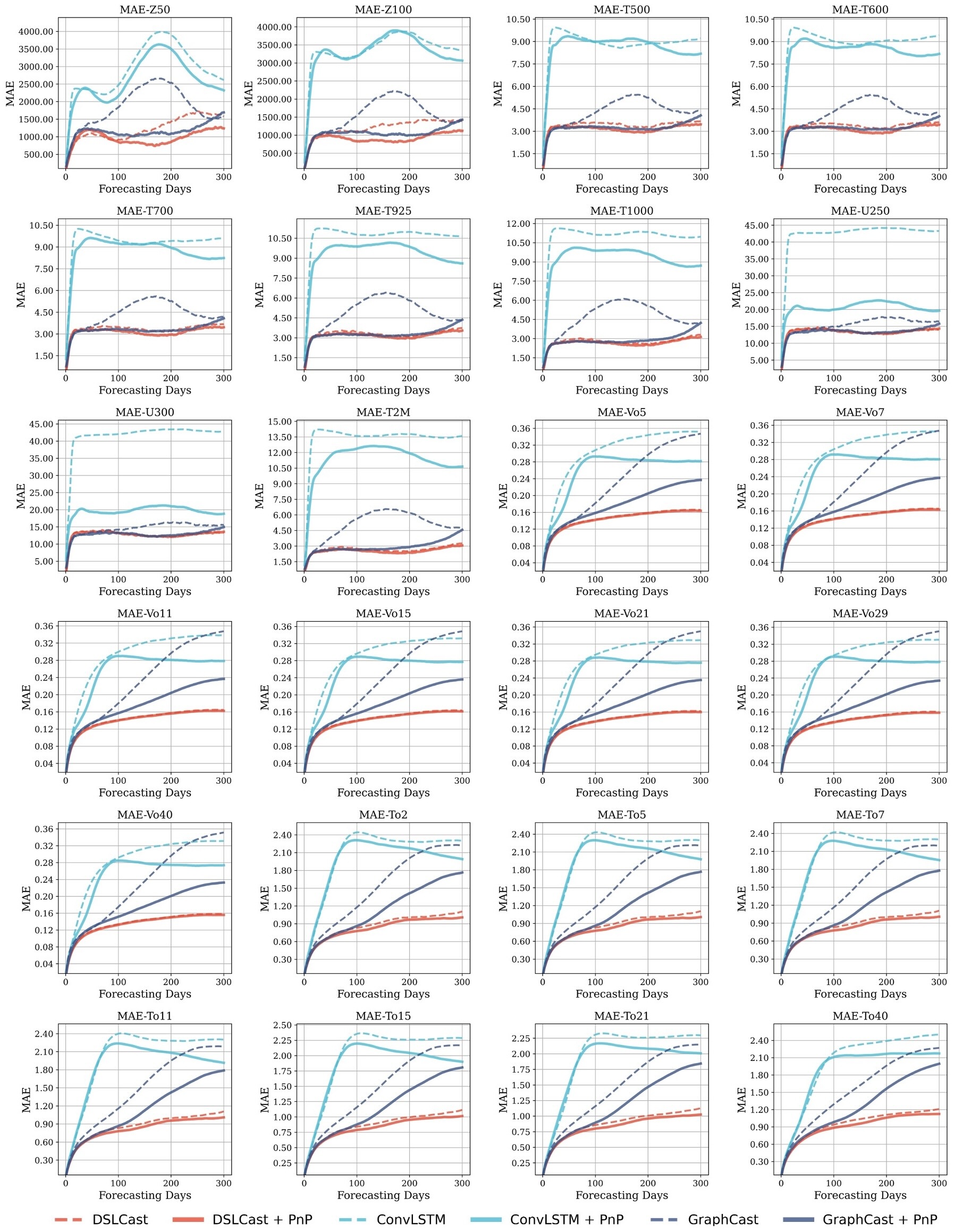}
\vspace{-8pt}
\caption{The latitude-weighted MAE (lower is better) results of several important atmosphere and ocean variables. The corrected models that use \method{} framework (solid lines) achieve lower MAE, demonstrating a universal improvement over the uncorrected baselines (dashed lines).}
\label{fig_mae_appendix}
\end{figure*}

\begin{figure*}[t]
\centering
\includegraphics[width=1.0\linewidth]{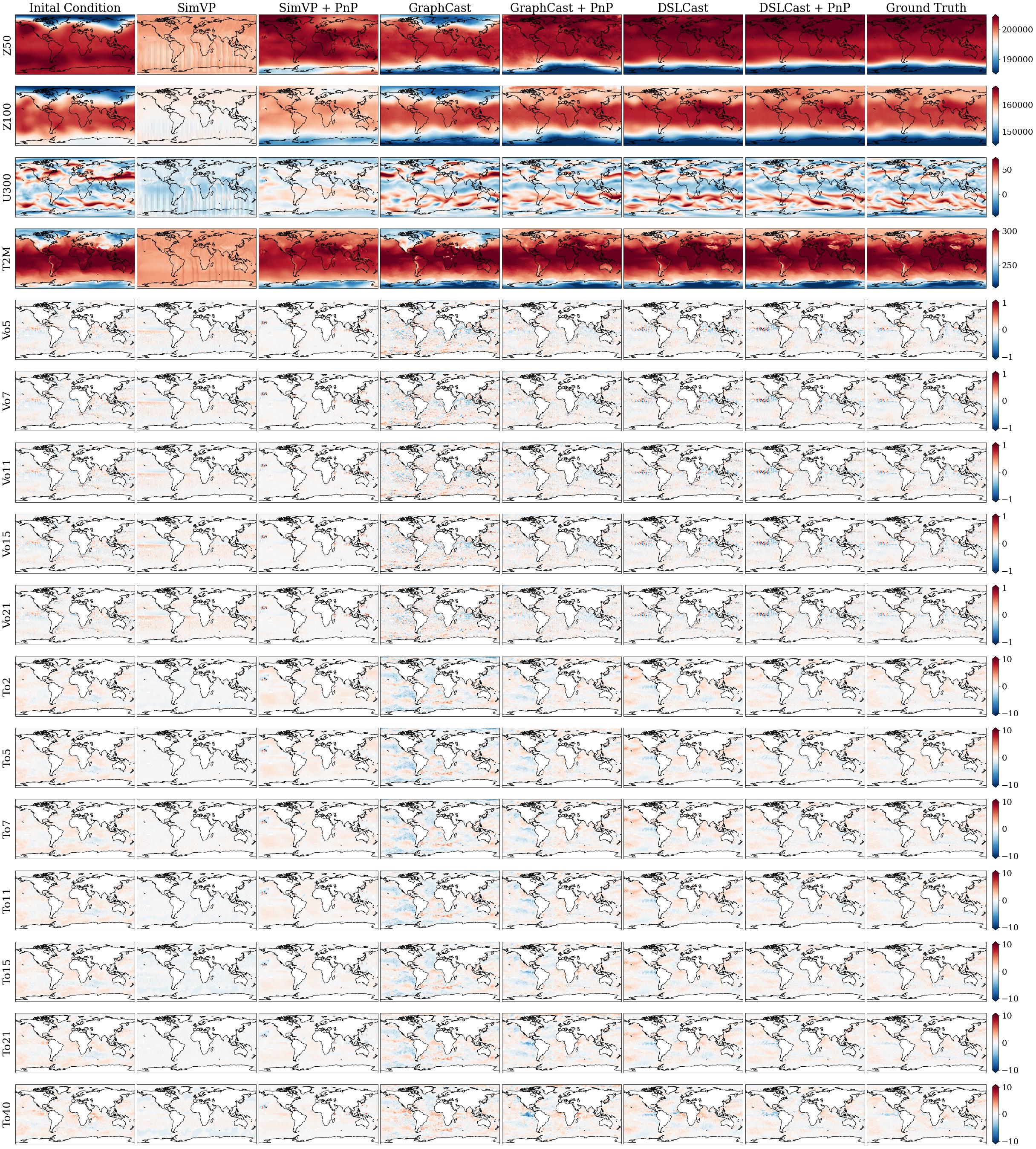}
\caption{120-day forecasting results of different models.}
\label{fig_120.0-day}
\end{figure*}

\begin{figure*}[t]
\centering
\includegraphics[width=1.0\linewidth]{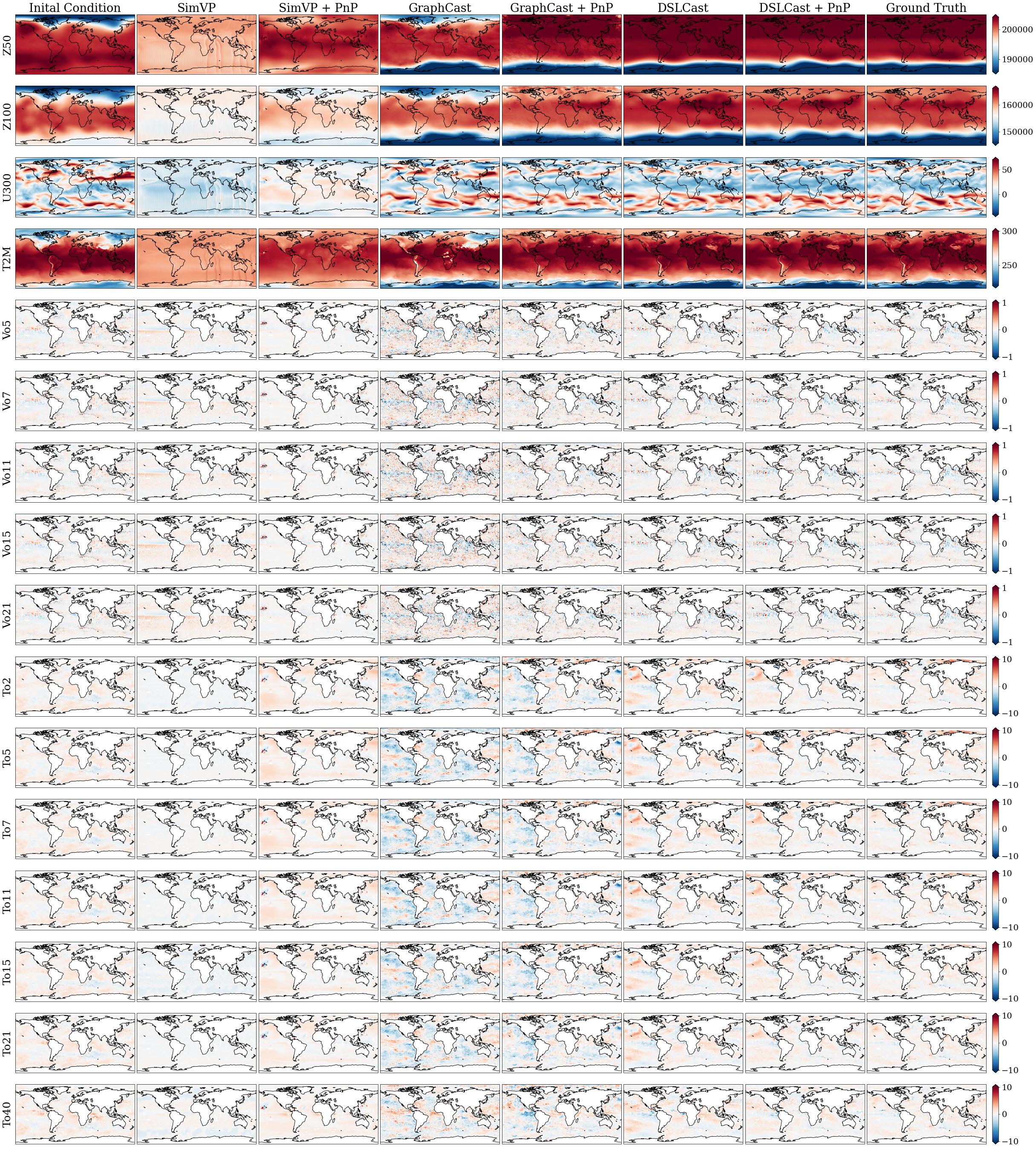}
\caption{180-day forecasting results of different models.}
\label{fig_180.0-day}
\end{figure*}

\begin{figure*}[t]
\centering
\includegraphics[width=1.0\linewidth]{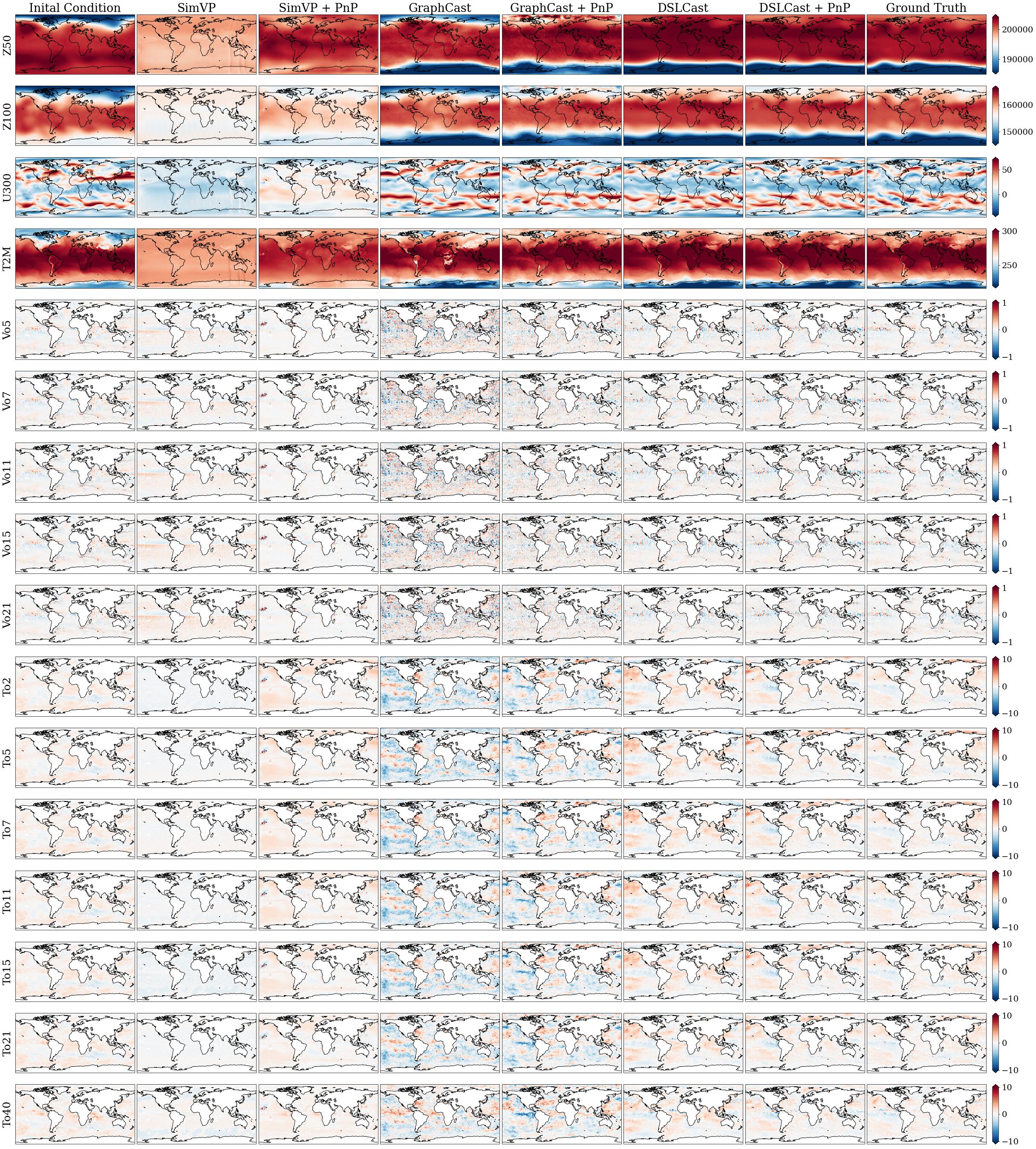}
\caption{240-day forecasting results of different models.}
\label{fig_240.0-day}
\end{figure*}

\begin{figure*}[t]
\centering
\includegraphics[width=1.0\linewidth]{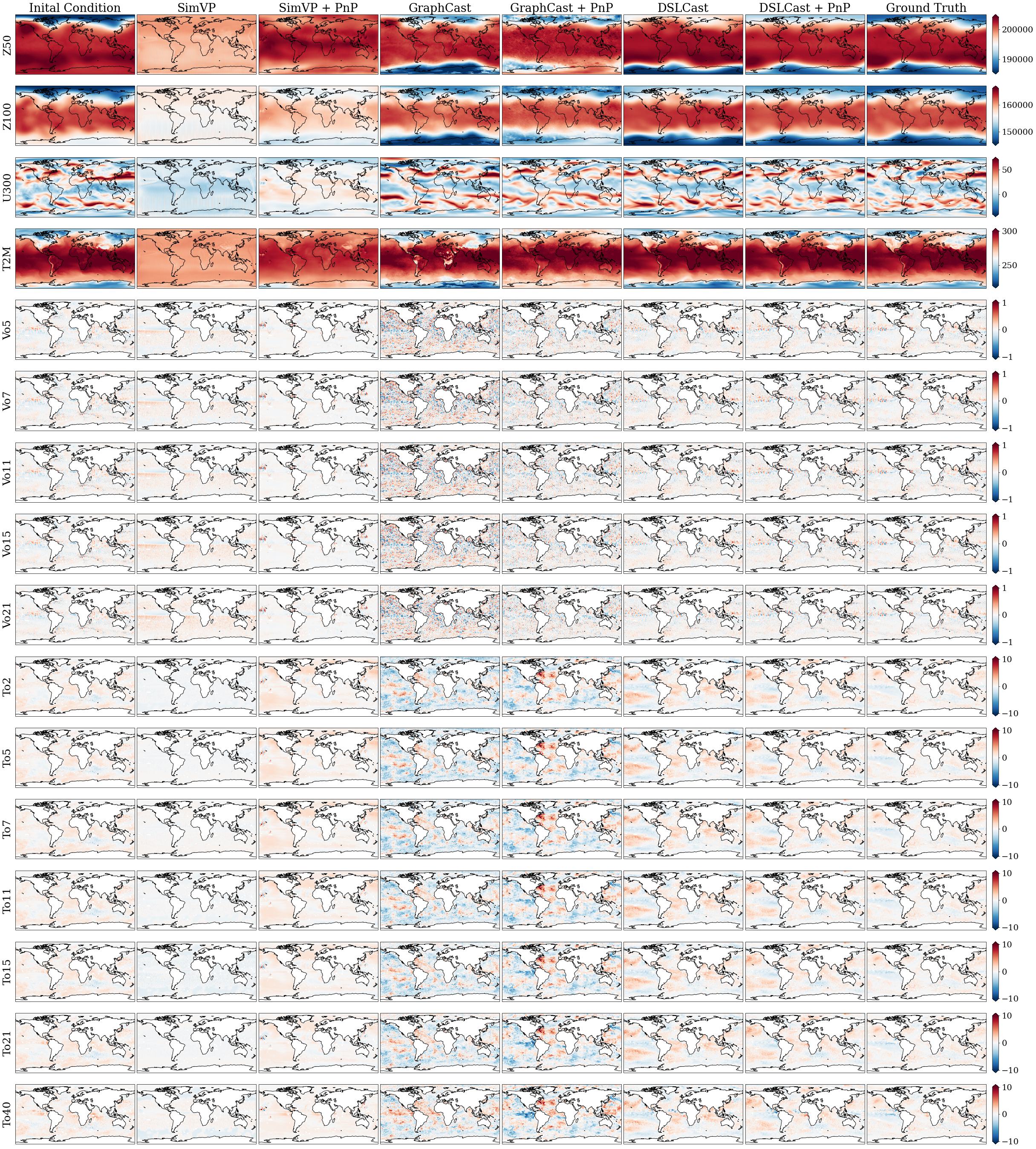}
\caption{300-day forecasting results of different models.}
\label{fig_300.0-day}
\end{figure*}

\end{document}